\newtheorem{theorem}{Theorem}[section]
\newtheorem{lemma}[theorem]{Lemma}
\newtheorem{definition}[theorem]{Definition}
\newtheorem{corollary}[theorem]{Corollary}
\newtheorem{assumptions}[theorem]{Assumptions}
\newenvironment{proof}{\par\noindent{\bf Proof:\ }}{\hfill$\Box$\\[2mm]}
\newif\ifpaper
\def\RR{\mathbb{R}}
\def\<{\langle}
\def\>{\rangle}
\def\rank{\operatorname{\textit{rank}}}
\def\vec{\operatorname{\textit{vec}}}
\def\innerProd#1#2{\< #1, #2 \>}
\def\Set#1{\left\{ #1 \right\}}
\def\Bigbar#1{\mathrel{\left|\vphantom{#1}\right.}}
\def\Setbar#1#2{\Set{#1 \Bigbar{#1 #2} #2}}
\newcommand{\inner}[1]{\left\langle#1\right\rangle}
\newcommand{\norm}[1]{\left\|#1\right\|}
\def\bydef{\mathrel{\mathop:}=}
\def\det{\mathop{\rm det}\nolimits}
\def\ones{\mathop{\rm e}\nolimits}
\def\sign{\mathop{\rm sign}\limits}
\def\min{\mathop{\rm min}\nolimits}
\def\max{\mathop{\rm max}\nolimits}
\def\ones{\mathbf{1}}
\def\ie{\textit{i.e. }}
\def\eg{\textit{e.g. }}
\def\W{\mathcal{W}}
\def\B{\mathcal{B}}
\def\P{\mathcal{P}}
\icmltitlerunning{The Loss Surface of Deep and Wide Neural Networks}
\begin{document} 

\twocolumn[
\icmltitle{
    The Loss Surface of Deep and Wide Neural Networks
}

\icmlsetsymbol{equal}{*}

\begin{icmlauthorlist}
\icmlauthor{Quynh Nguyen}{to}
\icmlauthor{Matthias Hein}{to}
\end{icmlauthorlist}

\icmlaffiliation{to}{Department of Mathematics and Computer Science, Saarland University, Germany}

\icmlcorrespondingauthor{Quynh Nguyen}{quynh@cs.uni-saarland.de}

\icmlkeywords{local minima, global optimality, loss surface, deep neural networks, wide neural networks}

\vskip 0.3in
]



\printAffiliationsAndNotice{}  

\begin{abstract} 
   While the optimization problem behind deep neural networks is highly non-convex, it is frequently observed in practice
   that training deep networks seems possible without getting stuck in suboptimal points. 
   It has been argued that this is the case as all local minima are close to being globally optimal. 
   We show that this is (almost) true, in fact almost all local minima are globally optimal, 
   for a fully connected network with squared loss and analytic activation function 
   given that the number of hidden units of one layer of the network is larger than the number of training points 
   and the network structure from this layer on is pyramidal. 
\end{abstract} 

\section{Introduction}


The application of deep learning \cite{CunBenHin2015} has in recent years lead to a dramatic boost in performance
in many areas such as computer vision, speech recognition or natural language processing.
Despite this huge empirical success, the theoretical understanding of deep learning is still limited. 
In this paper we address the non-convex optimization 
problem of training a feedforward neural network. 
This problem turns out to be very difficult as there can be exponentially many distinct local minima \cite{Auer96,SafSha2016}.
It has been shown that the training of a network with a single neuron with a variety of activation functions 
turns out to be NP-hard \cite{Sim2002}. 

In practice local search techniques like stochastic gradient descent or variants are used for training deep neural networks. 
Surprisingly, it has been observed \cite{Dauphin16, Goodfellow15} that in the training of state-of-the-art 
feedforward neural networks with sparse connectivity like convolutional neural networks \cite{CunEtAl1990,KriSutHin2012} 
or fully connected ones one does not encounter problems with suboptimal local minima. 
However, as the authors admit themselves in \cite{Goodfellow15}, 
the reason for this might be that there is a connection between the fact that these networks have good performance 
and that they are easy to train. 

On the theoretical side there have been several interesting developments recently,
see \eg \cite{Brutzkus2017,LeeEtal2016,Poggio2017,Rister2017,Soudry17,Zhou2017}. 
For some class of networks one can show that one can train them globally optimal efficiently. 
However, it turns out that these approaches are either not practical \cite{Janzamin15,HaeVid2015,Soltanolkotabi2017} 
as they require e.g. knowledge about the data generating measure, 
or they modify the neural network structure and objective \cite{GauNgoHei2016}. 
One class of networks which are simpler to analyze are deep linear networks for which it has been shown that 
every local minimum is a global minimum \cite{Baldi88,Kawaguchi16}. 
While this is a highly non-trivial result as the optimization problem is non-convex,
deep linear networks are not interesting in practice as one efficiently just learns a linear function.
In order to characterize the loss surface for general networks, an interesting approach has been taken by \cite{Choro15}. 
By randomizing the nonlinear part of a feedforward network
with ReLU activation function and making some additional simplifying assumptions, 
they can relate it to a certain spin glass model which one can analyze. In this model the objective
of local minima is close to the global optimum and the number of bad local minima decreases quickly with the distance 
to the global optimum. This is a very interesting result but
is based on a number of unrealistic assumptions \cite{ChoroJLMR15}. 
It has recently been shown \cite{Kawaguchi16} that if some of these assumptions are dropped one basically recovers
the result of the linear case, but the model is still unrealistic. 

In this paper we analyze the case of overspecified neural networks, that is the network is larger than 
what is required to achieve minimum training error.
Under overspecification \cite{SafSha2016} have recently analyzed under which conditions 
it is possible to generate an initialization so that it is in principle possible to reach the global optimum 
with descent methods. 
However, they can only deal with one hidden layer networks and have to make strong assumptions on the data 
such as linear independence or cluster structure. In this paper overspecification means that there exists a very wide layer, 
where the number of hidden units is larger than the number of training points. For this case,
we can show that a large class of local minima is globally optimal. 
In fact, we will argue that almost every critical point is globally optimal. Our results generalize previous
work of \cite{Yu95}, who have analyzed a similar setting for one hidden layer networks, to networks of arbitrary depth. 
Moreover, it extends results of \cite{Gori92,Frasconi97} who have shown that for certain deep feedforward neural networks 
almost all local minima are globally optimal whenever the training data is linearly independent.  While it is clear that our
assumption on the number of hidden units is quite strong, 
there are several recent neural network structures which contain a quite wide hidden layer 
relative to the number of training points e.g. in \cite{LinEtAl2016} they have 50,000 training samples and the network has
one hidden layer with 10,000 hidden units and \cite{BaCar2014} have 1.1 million training samples and a layer with
400,000 hidden units. We refer to \cite{CirEtAl2010,NeyEtAl2015,VinEtal2010,Caruana01} for other examples where the number
of hidden units of one layer is on the order of the number of training samples.
We conjecture that for these kind of wide networks it still holds that almost all local minima are globally optimal. 
The reason is that one can expect linear separability of the training data in the wide layer. 
We provide supporting evidence for this conjecture by showing that basically every critical point
for which the training data is linearly separable in the wide layer is globally optimal. 
Moreover, we want to emphasize that all of our results hold for neural networks used in practice.
There are no simplifying assumptions as in previous work.

\section{Feedforward Neural Networks and Backpropagation}
We are mainly concerned with multi-class problems but our results also apply to multivariate regression problems.
Let $N$ be the number of training samples and 
denote by $X=[x_1,\ldots,x_N]^T\in\RR^{N\times d}, Y=[y_1,\ldots,y_N]^T\in\RR^{N\times m}$
the input resp. output matrix for the training data $(x_i,y_i)_{i=1}^N$, where $d$ is the input dimension
and $m$ the number of classes.
We consider fully-connected feedforward networks with $L$ layers, indexed from $0,1,2,\ldots,L,$
which correspond to the input layer, 1st hidden layer, etc, and output layer.
The network structure is determined by the weight matrices
$(W_k)_{k=1}^L\in \W:=\RR^{d\times n_1}\times\ldots\times \RR^{n_{k-1}\times n_k}\times\ldots\times \RR^{n_{L-1}\times m}$; 
where $n_k$ is the number of hidden units of layer $k$ (for consistency, we set $n_0=d,n_L=m$), 
and the bias vectors $(b_k)_{k=1}^L \in \B:=\RR^{n_1}\times \ldots \times\RR^{n_L}$.
We denote by $\mathcal{\P}=\W \times \B$ the space of all possible parameters of the network.
In this paper, $[a]$ denotes the set of integers $\Set{1,2,\ldots,a}$ 
and $[a,b]$ the set of integers from $a$ to $b$.
The activation function $\sigma:\RR\to\RR$ is assumed at least to be continuously differentiable, that is $\sigma \in C^1(\RR)$.
In this paper, we assume that all the functions are applied componentwise.
Let $f_k,g_k:\RR^d\to\RR^{n_k}$ be the mappings from the input space to the feature space at layer $k$,
which are defined as 
\[ f_0(x)=x,f_k(x) = \sigma(g_k(x)),g_k(x)=W_k^T f_{k-1}(x) + b_k\]
for every $k\in[L], x\in\RR^d.$ 
In the following, let $F_k=[f_k(x_1), f_k(x_2), \ldots, f_k(x_N)]^T\in\RR^{N\times n_k}$
and $G_k=[g_k(x_1), g_k(x_2), \ldots, g_k(x_N)]^T\in\RR^{N\times n_k}$ 
be the matrices that store the feature vectors of layer $k$ after and before applying the activation function.
One can easily check that
\begin{align*}
 F_1 &=\sigma(XW_1+\ones_N b_1^T) ,\\
 F_k &=\sigma(F_{k-1}W_k+\ones_N b_k^T), \; \textrm{ for } k\in[2,L].
\end{align*}
In this paper we analyze the behavior of the loss of the network without any form of regularization, 
that is the final objective $\Phi:\P \to \RR$ of the network is defined as
\begin{align}\label{eq:main1}
    \Phi\Big( (W_k,b_k)_{k=1}^L\Big) = \sum_{i=1}^N \sum_{j=1}^m l(f_{Lj}(x_i) - y_{ij}) 
\end{align}
where $l:\RR\to\RR$ is assumed to be a continuously differentiable loss function, that is $l \in C^1(\RR)$.
The prototype loss which we consider in this paper is the squared loss, $l(\alpha)=\alpha^2$, which is one of the standard
loss functions in the neural network literature. We assume throughout this paper that the minimum of \eqref{eq:main1} is attained.

The idea of backpropagation is the core of our theoretical analysis.
Lemma \ref{lem:grad} below shows well-known relations for feed-forward neural networks, which are
used throughout the paper. 
The derivative of the loss w.r.t. the value of unit $j$ at layer $k$ evaluated at a single training sample $x_i$ 
is denoted as $\delta_{kj}(x_i)=\frac{\partial\Phi}{\partial g_{kj}(x_i)}.$
We arrange these vectors for all training samples into a single matrix $\Delta_k$, defined as
\[ \Delta_k=[\delta_{k:}(x_1), \ldots, \delta_{k:}(x_N)]^T\in\RR^{N\times n_k}.\]
In the following we use the Hadamard product $\circ$, 
which for $A,B \in \RR^{m \times n}$ is defined as $A \circ B \in \RR^{m \times n}$ with
$(A \circ B)_{ij}=A_{ij}B_{ij}$.
\begin{lemma}\label{lem:grad}
    Let $\sigma,l\in C^1(\RR)$. Then it holds
    \begin{enumerate}
	\item $\Delta_k=
	    \begin{cases}
		l'(F_L-Y)\circ \sigma'(G_L), &k=L\\
		(\Delta_{k+1}W_{k+1}^T) \circ \sigma'(G_{k}), &k\in[L-1]
	    \end{cases}$
	\item $\nabla_{W_k}\Phi=
	    \begin{cases}
		X^T \Delta_1, &k=1\\
		F_{k-1}^T \Delta_k, & k\in[2,L]
	    \end{cases}$
	\item $\nabla_{b_k}\Phi=\Delta_k^T\ones_N$ $ \forall\,k\in[L]$
    \end{enumerate}
\end{lemma}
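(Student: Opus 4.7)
The plan is to derive each of the three identities by a careful entrywise chain-rule computation, working in terms of the scalars $\delta_{kj}(x_i)$ and then repackaging the result in matrix form. The feed-forward structure is what makes this clean: a perturbation of $g_{kj}(x_i)$ propagates only through the layers above $k$ evaluated at the same sample $x_i$, so no sum over training points appears in the recursion for $\Delta_k$, and each row of $\Delta_k$ can be analyzed independently.

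For part (1), I would start at layer $L$. Since only the $(i,j)$ summand of $\Phi$ depends on $g_{Lj}(x_i)$ and $f_{Lj}(x_i) = \sigma(g_{Lj}(x_i))$, one chain-rule step gives $\delta_{Lj}(x_i) = l'(f_{Lj}(x_i) - y_{ij})\,\sigma'(g_{Lj}(x_i))$, which is the $(i,j)$ entry of $l'(F_L - Y) \circ \sigma'(G_L)$. For $k < L$, I expand
\[
\delta_{kj}(x_i) = \sum_{j'} \delta_{k+1,j'}(x_i)\, \frac{\partial g_{k+1,j'}(x_i)}{\partial g_{kj}(x_i)},
\]
and use $g_{k+1,j'} = \sum_{j''} (W_{k+1})_{j''j'}\, \sigma(g_{kj''}) + (b_{k+1})_{j'}$ to see that $\partial g_{k+1,j'}(x_i)/\partial g_{kj}(x_i) = (W_{k+1})_{jj'}\,\sigma'(g_{kj}(x_i))$. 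The $j'$-sum is then the $(i,j)$ entry of $\Delta_{k+1}W_{k+1}^T$, and factoring out $\sigma'(g_{kj}(x_i))$ produces the claimed Hadamard product.

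For parts (2) and (3), I observe that $(W_k)_{i'j}$ and $(b_k)_j$ enter $\Phi$ only through the layer-$k$ pre-activations across all training samples, so
\[
\frac{\partial \Phi}{\partial (W_k)_{i'j}} = \sum_{i''} \sum_{j'} \delta_{k,j'}(x_{i''})\, \frac{\partial g_{k,j'}(x_{i''})}{\partial (W_k)_{i'j}},
\]
and the analogous formula holds for $(b_k)_j$. For $k \geq 2$ the inner partial is $(F_{k-1})_{i''i'}\, \mathbbm{1}[j' = j]$, which collapses the double sum to $(F_{k-1}^T \Delta_k)_{i'j}$; for $k = 1$ the same argument with $F_0 = X$ gives $X^T \Delta_1$; and for the bias the inner partial is $\mathbbm{1}[j' = j]$, producing $\sum_{i''} (\Delta_k)_{i''j} = (\Delta_k^T \ones_N)_j$.

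Since this is the standard backpropagation identity, there is no substantive obstacle. The only place that requires care is matching the row convention used in the paper (where $F_k$, $G_k$, and $\Delta_k$ store one sample per row): this forces the transpose $W_{k+1}^T$ in the recursion and $F_{k-1}^T \Delta_k$ in the weight gradient, rather than the opposite orderings one might naively write down.
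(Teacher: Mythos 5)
Your proposal is correct and follows essentially the same route as the paper's own proof: an entrywise chain-rule computation for $\delta_{kj}(x_i)$ at layer $L$ and in the recursion, followed by the same collapse of the sample/unit sums into the matrix products $X^T\Delta_1$, $F_{k-1}^T\Delta_k$, and $\Delta_k^T\ones_N$. The only cosmetic difference is that you write the weight-gradient chain rule with an explicit double sum and an indicator, whereas the paper notes directly that $(W_k)_{rs}$ only enters through $g_{ks}$; the substance is identical.
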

\ifpaper
\begin{proof}
    \begin{enumerate}
	\item By definition, it holds for every $i\in[N],j\in[n_L]$ that
	\begin{align*}
	    (\Delta_L)_{ij}&=\delta_{Lj}(x_i)\\
	    &=\frac{\partial\Phi}{\partial g_{Lj}(x_i)} \\
	    &=l'(f_{Lj}(x_i)-y_{ij}) \sigma'(g_{Lj}(x_i))\\
	    &=l'({(F_L)}_{ij}-Y_{ij}) \sigma'({(G_L)}_{ij}) 
	\end{align*}
	and hence, $\Delta_L=l'(F_L-Y)\circ\sigma'(G_L).$
	
	For every $k\in[L-1]$, the chain rule yields for every $i\in[N],j\in[n_k]$ that
	\begin{align*}
	    (\Delta_k)_{ij}&=\delta_{kj}(x_i)\\
	    &=\frac{\partial\Phi}{\partial g_{kj}(x_i)} \\
	    &=\sum_{l=1}^{n_{k+1}} \frac{\partial\Phi}{\partial g_{(k+1)l}(x_i)} \frac{\partial g_{(k+1)l}(x_i)}{\partial g_{kj}(x_i)}  \\
	    &=\sum_{l=1}^{n_{k+1}} \delta_{(k+1)l} (x_i) {(W_{k+1})}_{jl} \sigma'(g_{kj}(x_i)) \\
	    &=\sum_{l=1}^{n_{k+1}} {(\Delta_{(k+1)})}_{il} {(W_{k+1})^T}_{lj} \sigma'({(G_k)}_{ij}) 
	\end{align*}
	and hence $\Delta_k=(\Delta_{k+1}W_{k+1}^T) \circ \sigma'(G_{k}).$
	
	\item For every $r\in[d],s\in[n_1]$ it holds
	\begin{align*}
	    &\frac{\partial\Phi}{\partial (W_1)_{rs}} = \sum_{i=1}^N \frac{\partial\Phi}{\partial g_{1s}(x_i)}
	    \frac{\partial g_{1s}(x_i)}{\partial (W_1)_{rs}} \\
	    &=\sum_{i=1}^N \delta_{1s}(x_i) x_{ir} =\sum_{i=1}^N (X^T)_{ri} {(\Delta_1)}_{is} \\
	    &=\big(X^T\Delta_1\big)_{rs}
	\end{align*}
	and hence $\nabla_{W_1}\Phi=X^T\Delta_1.$
		
	For every $k\in[2,L],r\in[n_{k-1}],s\in[n_k],$ one obtains
	\begin{align*}
	    &\frac{\partial\Phi}{\partial (W_k)_{rs}} = \sum_{i=1}^N \frac{\partial\Phi}{\partial g_{ks}(x_i)}
	    \frac{\partial g_{ks}(x_i)}{\partial (W_k)_{rs}} \\
	    &=\sum_{i=1}^N \delta_{ks}(x_i) f_{(k-1)r}(x_i) =\sum_{i=1}^N (F_{k-1}^T)_{ri} {(\Delta_k)}_{is} \\
	    &=\big( F_{k-1}^T \Delta_k\big)_{rs}
	\end{align*}
	and hence $\nabla_{W_k}\Phi={F_{k-1}^T}\Delta_k.$
	
	\item 
	For every $k\in[1,L]$, $s\in[n_k]$ it holds
	\begin{align*}
	    &\frac{\partial\Phi}{\partial (b_k)_{s}} = \sum_{i=1}^N \frac{\partial\Phi}{\partial g_{ks}(x_i)}
	    \frac{\partial g_{ks}(x_i)}{\partial (b_k)_{s}} \\
	    &=\sum_{i=1}^N \delta_{ks}(x_i) = \big(\Delta_k^T\ones_N\big)_{s}
	\end{align*}
	and hence $\nabla_{b_k}\Phi=\Delta_k^T\ones_N.$
    \end{enumerate} 
\end{proof}
\fi
Note that Lemma \ref{lem:grad} does not apply to non-differentiable activation functions like the ReLU function, $\sigma_{\textrm{ReLU}}(x)=\max\{0,x\}$.
However, it is known that one can approximate this activation function arbitrarily well by 
a smooth function e.g. $\sigma_\alpha(x)=\frac{1}{\alpha}\log(1+e^{\alpha x})$ (a.k.a. softplus)
satisfies $\lim_{\alpha \rightarrow \infty} \sigma_\alpha(x)=\sigma_{\textrm{ReLU}}(x)$ for any $x \in \RR$.


\section{Main Result}\label{sec:main}
We first discuss some prior work and present then our main result together with extensive discussion. 
For improved readability we postpone the proof of the main result to the next section which contains several
intermediate results which are of independent interest.

\subsection{Previous Work}
Our work can be seen as a generalization of the work of \cite{Gori92, Yu95}. While \cite{Yu95} has shown that 
for a one-hidden layer network,
that if $n_1=N-1$, then every local minimum is a global minimum, the work of \cite{Gori92} considered also multi-layer networks.
For the convenience of the reader, we first restate Theorem 1 of \cite{Gori92} using our previously introduced notation.
The critical points of a continuously differentiable function $f:\RR^d \rightarrow \RR$ are the
points where the gradient vanishes, that is $\nabla f(x)=0$. Note that this is a necessary condition for a local minimum.
\begin{theorem}\cite{Gori92}\label{th:gori}
    Let $\Phi:\P\to\RR$ be defined as in \eqref{eq:main1} with least squares loss $l(a)=a^2.$
    Assume $\sigma:\RR\to[\underbar{d},\bar{d}]$ to be continuously differentiable with strictly positive derivative and
    $$\begin{array}{lr}
	\lim\limits_{a\to\infty} \frac{\sigma'(a)}{\bar{d}-\sigma(a)}>0, & \lim_{a\to\infty} \frac{-\sigma''(a)}{\bar{d}-\sigma(a)}>0 \\
	\lim_{a\to-\infty} \frac{\sigma'(a)}{\sigma(a)-\underbar{d}}>0, & \lim_{a\to-\infty} \frac{\sigma''(a)}{\sigma(a)-\underbar{d}}>0
    \end{array}$$
    Then every critical point $(W_l,b_l)_{l=1}^L$ of $\Phi$ which satisfies the conditions
    \begin{enumerate}
	\item $\rank(W_l)=n_l$ for all $l\in[2,L]$,
	\item $[X,\ones_N]^T\Delta_1=0$ implies $\Delta_1=0$
    \end{enumerate}
    is a global minimum.
\end{theorem}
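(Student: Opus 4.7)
The plan is to show that at any critical point satisfying hypotheses (1) and (2), the backpropagated error matrices $\Delta_k$ vanish for every layer $k \in [L]$. Once $\Delta_L = 0$, the squared-loss form of $\Delta_L$ combined with strict positivity of $\sigma'$ will force $F_L = Y$, so that $\Phi = 0$, which is trivially a global minimum.

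For the base case, I would apply Lemma~\ref{lem:grad} at the bottom layer: setting the gradient to zero gives $\nabla_{W_1}\Phi = X^T \Delta_1 = 0$ and $\nabla_{b_1}\Phi = \Delta_1^T \ones_N = 0$, which stack to $[X,\ones_N]^T \Delta_1 = 0$. Hypothesis (2) then immediately yields $\Delta_1 = 0$.

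For the inductive step, I would assume $\Delta_k = 0$ for some $k \in [L-1]$ and use the backpropagation recursion $\Delta_k = (\Delta_{k+1} W_{k+1}^T) \circ \sigma'(G_k)$. Since $\sigma'(a) > 0$ for every $a \in \RR$, every entry of $\sigma'(G_k)$ is strictly positive, so the vanishing of $\Delta_k$ forces $\Delta_{k+1} W_{k+1}^T = 0$. Transposing, each row of $\Delta_{k+1}$ lies in $\ker(W_{k+1})$; but hypothesis (1) says $\rank(W_{k+1}) = n_{k+1}$, meaning $W_{k+1}$ has trivial kernel, so $\Delta_{k+1} = 0$. Iterating up to $k = L$ yields $\Delta_L = 0$. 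Finally, with the squared loss $l(a) = a^2$, Lemma~\ref{lem:grad} gives $\Delta_L = 2(F_L - Y) \circ \sigma'(G_L)$; combined with $\sigma'(G_L) > 0$ entrywise this forces $F_L = Y$, so $\Phi = 0$ and the critical point is global.

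The discrete algebraic skeleton above is clean, and I expect the main obstacle to be conceptual rather than computational: the asymptotic growth conditions imposed on $\sigma'$ and $\sigma''$ in the theorem do not appear in the induction sketch, so a rigorous rendering has to explain where they enter. My reading is that they play no role in the vanishing-$\Delta_k$ argument per se; instead, they are needed to rule out ``critical points at infinity'' — that is, to guarantee that the infimum of $\Phi$ is attained at finite weights where the rank hypothesis (1) and the injectivity hypothesis (2) can simultaneously be verified. Making that attainment rigorous (for instance by controlling how $\sigma'(g)$ decays relative to $\bar d - \sigma(g)$ along unbounded sequences of weights) is the delicate part; everything else is a direct application of Lemma~\ref{lem:grad} together with the two stated algebraic conditions.
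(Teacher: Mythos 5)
Your proof is correct and is essentially the argument the paper itself gives for the closely analogous Theorem~\ref{theo:independent_inputs} (the paper only cites Theorem~\ref{th:gori} without reproving it): stack $\nabla_{W_1}\Phi=X^T\Delta_1=0$ and $\nabla_{b_1}\Phi=\Delta_1^T\ones_N=0$ into $[X,\ones_N]^T\Delta_1=0$, invoke hypothesis (2) to get $\Delta_1=0$, propagate upward via Lemma~\ref{lem:grad}, the strict positivity of $\sigma'$, and the full-column-rank condition to reach $\Delta_L=0$, and conclude $F_L=Y$, hence $\Phi=0$. Your reading of the asymptotic conditions on $\sigma'$ and $\sigma''$ is also consistent with the paper, which proves Theorem~\ref{theo:independent_inputs} without any such conditions, confirming they play no role in the finite-critical-point induction.
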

While this result is already for general multi-layer networks, 
the condition ``$[X,\ones_N]^T\Delta_1=0$ implies $\Delta_1=0$'' is the main caveat.
It is already noted in \cite{Gori92}, that ``it is quite hard to understand its practical meaning'' 
as it requires prior knowledge of $\Delta_1$ at every critical point.
Note that this is almost impossible as $\Delta_1$ depends on all the weights of the network.
For a particular case, when the training samples (biases added) are linearly independent, \ie $\rank([X,\ones_N])=N$,
the condition holds automatically. 
This case is discussed in the following Theorem \ref{theo:independent_inputs}, where
we consider a more general class of loss and activation functions. 

\subsection{First Main Result and Discussion}
A function $f:\RR^d \rightarrow \RR$ is real analytic
if the corresponding multivariate Taylor series converges to $f(x)$ on an open subset of $\RR^d$ \cite{KraPar2002}.
All results in this section are proven under the following assumptions on the loss/activation function and training data. 
\begin{assumptions}\label{ass}
\begin{enumerate}
    \item There are no identical training samples, \ie $x_i \neq x_j$ for all $i\neq j$,
    \item $\sigma$ is analytic on $\RR$, strictly monotonically increasing and
           \begin{enumerate} 
            \item $\sigma$ is bounded  or
            \item there are positive $\rho_1,\rho_2,\rho_3,\rho_4$,
            s.t. $|\sigma(t)|\leq \rho_1 e^{\rho_2 t}$ for $t< 0$ and $|\sigma(t)|\leq \rho_3  t + \rho_4 $ for $t\geq 0$
          \end{enumerate}
    \item $l\in C^2(\RR)$ and if $l'(a)=0$ then $a$ is a global minimum
\end{enumerate}
\end{assumptions}
These conditions are not always necessary to prove some of the intermediate results presented below,
but we decided to provide the proof under the above strong assumptions for better readability.
For instance, all of our results also hold for strictly monotonically decreasing activation functions.
Note that the above conditions are not restrictive as many standard activation functions satisfy them.
\begin{lemma}
The sigmoid activation function $\sigma_1(t)=\frac{1}{1+e^{-t}}$, 
the tangent hyperbolic $\sigma_2(t)=\mathrm{tanh}(t)$ 
and the softplus function $\sigma_3(t)=\frac{1}{\alpha}\log(1+e^{\alpha t})$ for $\alpha>0$ 
satisfy Assumption \ref{ass}.
\end{lemma}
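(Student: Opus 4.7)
The plan is to verify for each of the three functions the three properties required by Assumption \ref{ass}(2): real analyticity on $\RR$, strict monotonic increase, and the dichotomy ``bounded $\lor$ growth bound''. The conditions on training samples and on the loss are independent of $\sigma$ and so do not concern us here.

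For $\sigma_1(t)=(1+e^{-t})^{-1}$ and $\sigma_2(t)=\tanh(t)$, analyticity is immediate: $e^{-t}$ and $e^t$ are entire, $1+e^{-t}>0$ on $\RR$ so the reciprocal is analytic, and $\tanh$ is a ratio of entire functions whose denominator $e^t+e^{-t}$ has no real zeros. Strict monotonicity follows from $\sigma_1'(t)=\sigma_1(t)(1-\sigma_1(t))>0$ and $\sigma_2'(t)=1-\tanh^2(t)>0$. Finally, $\sigma_1$ takes values in $(0,1)$ and $\sigma_2$ in $(-1,1)$, so both satisfy the boundedness alternative (2a), and we are done for these two.

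For $\sigma_3(t)=\frac{1}{\alpha}\log(1+e^{\alpha t})$, analyticity holds because $1+e^{\alpha t}>1$ on $\RR$ and $\log$ is analytic on the positive reals, so $\sigma_3$ is a composition of analytic maps. Its derivative $\sigma_3'(t)=e^{\alpha t}/(1+e^{\alpha t})>0$ gives strict monotonicity. Since $\sigma_3$ is unbounded (it grows linearly as $t\to+\infty$), we must verify the growth bound (2b) rather than (2a). The key estimates are the elementary inequalities $\log(1+x)\le x$ for $x\ge 0$ and $\log(1+x)\le \log 2 + \max(0,\log x)$. For $t<0$, applying $\log(1+x)\le x$ with $x=e^{\alpha t}\in(0,1)$ yields
\[
0 \le \sigma_3(t) \le \frac{1}{\alpha} e^{\alpha t},
\]
so (2b) holds on $(-\infty,0)$ with $\rho_1=1/\alpha$, $\rho_2=\alpha$. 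For $t\ge 0$ we factor $1+e^{\alpha t}=e^{\alpha t}(1+e^{-\alpha t})$ and use $\log(1+e^{-\alpha t})\le \log 2$ to get
\[
\sigma_3(t) = t + \frac{1}{\alpha}\log(1+e^{-\alpha t}) \le t + \frac{\log 2}{\alpha},
\]
so (2b) holds on $[0,\infty)$ with $\rho_3=1$, $\rho_4=(\log 2)/\alpha$.

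There is no real obstacle in this lemma; it is a routine verification. The only step that requires a small trick is bounding softplus on $t\ge 0$, where the useful step is to pull out the dominant term $e^{\alpha t}$ inside the logarithm so that the remainder is manifestly bounded.
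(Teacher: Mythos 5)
Your proof is correct and follows essentially the same route as the paper's: analyticity via composition/quotient of analytic functions, boundedness for $\sigma_1,\sigma_2$, and for $\sigma_3$ the bounds $\sigma_3(t)\le e^{\alpha t}/\alpha$ for $t<0$ (via $\log(1+x)\le x$) and $\sigma_3(t)\le t+\log(2)/\alpha$ for $t\ge 0$ (the paper writes $1+e^{\alpha t}\le 2e^{\alpha t}$, which is the same estimate as your factorization), arriving at the identical constants $\rho_1=1/\alpha$, $\rho_2=\alpha$, $\rho_3=1$, $\rho_4=\log(2)/\alpha$. Your explicit derivative computations for strict monotonicity are a small addition the paper leaves implicit.
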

\ifpaper
\begin{proof}
Note that $\sigma_2(t)=\frac{2}{1+e^{-2t}}-1$. 
Moreover, it is well known that $\phi(t)=\frac{1}{1+t}$ is real-analytic on 
$\RR_+=\{t \in \RR \,|\,t\geq 0\}$. 
The exponential function is analytic with values in $(0,\infty)$.
As composition of real-analytic function is real-analytic (see Prop 1.4.2 in \cite{KraPar2002}), 
we get that $\sigma_1$ and $\sigma_2$ are real-analytic.
Similarly, since $\log(1+t)$ is real-analytic on $(-1,\infty)$ and the composition with the exponential function is real-analytic,
we get that $\sigma_3$ is a real-analytic function.

Finally, we note that $\sigma_1$,$\sigma_2,\sigma_3$ are strictly monotonically increasing. 
Since $\sigma_1$,$\sigma_2$ are bounded, they both satisfy Assumption \ref{ass}.
For $\sigma_3$, we note that $1+e^{\alpha t} \leq 2e^{\alpha t}$ for $t\geq 0$, and thus it holds for every $t\geq 0$ that
\begin{align*} 
    0 \leq \sigma_3(t) 
    &= \frac{1}{\alpha}\log(1+e^{\alpha t}) \\
    &\leq \frac{1}{\alpha}\log(2 e^{\alpha t}) \\
    &= \frac{\log(2)}{\alpha} + t,
\end{align*}
and with $\log(1+x)\leq x$ for $x>-1$ it holds $\log(1+e^{\alpha t}) \leq e^{\alpha t}$ for every $t\in\RR$. 
In particular \[ 0 \leq \sigma_3(t) \leq \frac{ e^{\alpha t}}{\alpha} \quad \forall t<0\]
which implies that $\sigma_3$ satisfies Assumption \ref{ass} 
for $\rho_1=1/\alpha,\rho_2=\alpha,\rho_3=1,\rho_4=\log(2)/\alpha.$
\end{proof}
\fi
The conditions on $l$ are satisfied for any twice continuously differentiable convex loss function.
A typical example is the squared loss $l(a)=a^2$ 
or the Pseudo-Huber loss \cite{Hartley2004} given as $l_\delta(a)=2\delta^2(\sqrt{1+a^2/\delta^2}-1)$ 
which approximates $a^2$ for small $a$ and is linear with slope $2\delta$ for large $a.$
But also non-convex loss functions satisfy this requirement, for instance:
\begin{enumerate}
    \item Blake-Zisserman: $l(a)=-\log(\exp(-a^2)+\delta)$ for $\delta>0.$
    For small $a$, this curve approximates $a^2$, whereas for large $a$ the asymptotic value is $-\log(\delta).$
    \item Corrupted-Gaussian: $$l(a)=-\log\big(\alpha\exp(-a^2)+(1-\alpha)\exp(-a^2/w^2)/w\big)$$
     for $\alpha\in[0,1], w>0.$
    This function computes the negative log-likehood of a gaussian mixture model.
    \item Cauchy: $l(a)=\delta^2\log(1+a^2/\delta^2)$  for $\delta\neq 0.$
    This curve approximates $a^2$ for small $a$ and the value of $\delta$ determines for
    what range of $a$ this approximation is close.
\end{enumerate}
We refer to \cite{Hartley2004} (p.617-p.619) for more examples and discussion on robust loss functions.
\begin{figure*}[ht]
\begin{center}
    \subfigure[Blake-Zisserman ($\delta=0.1$)]{\includegraphics[width=0.23\linewidth]{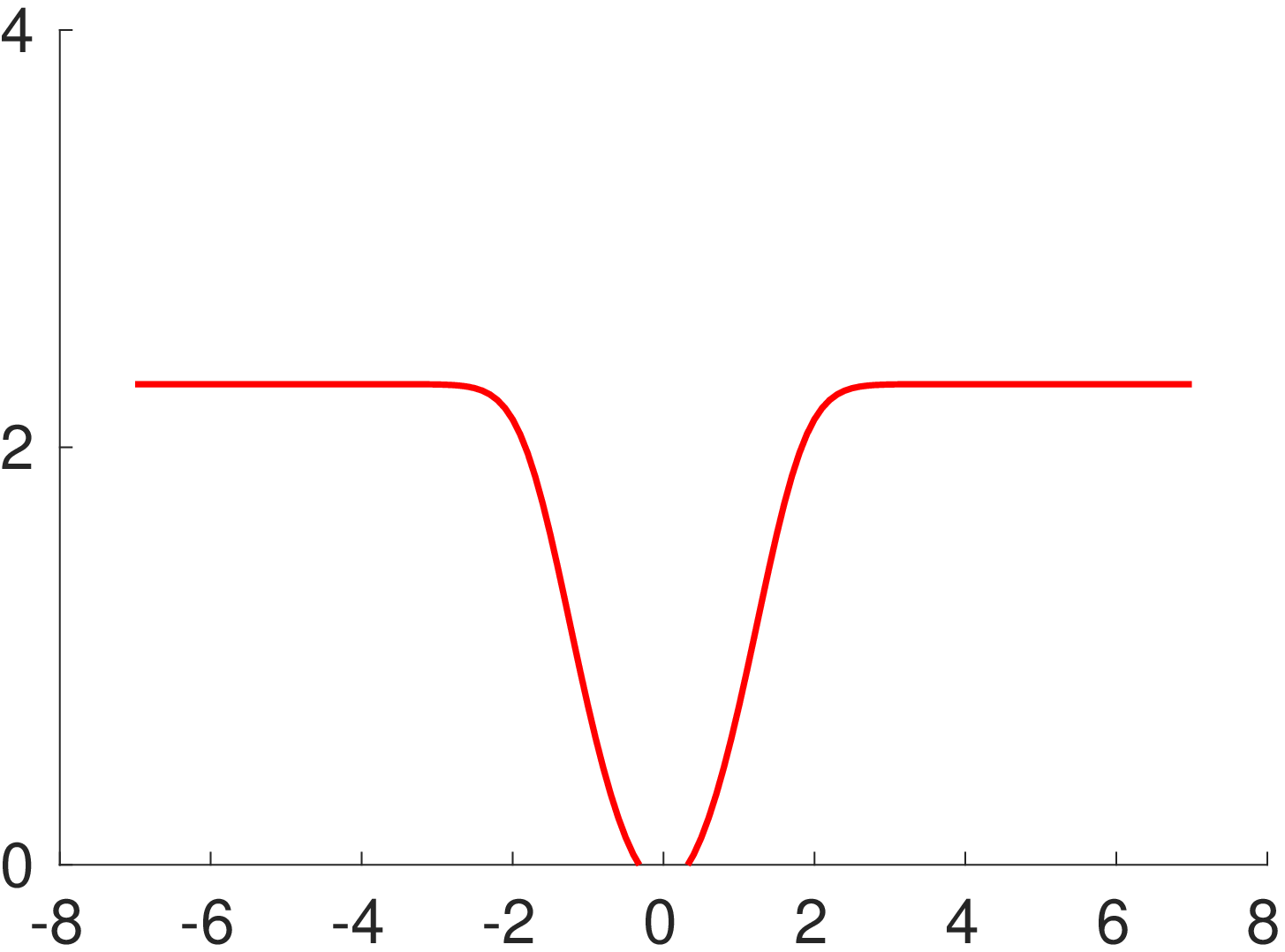}}
    \subfigure[Corrupted-Gaussian ($\alpha=0.5,w=6$)]{\includegraphics[width=0.23\linewidth]{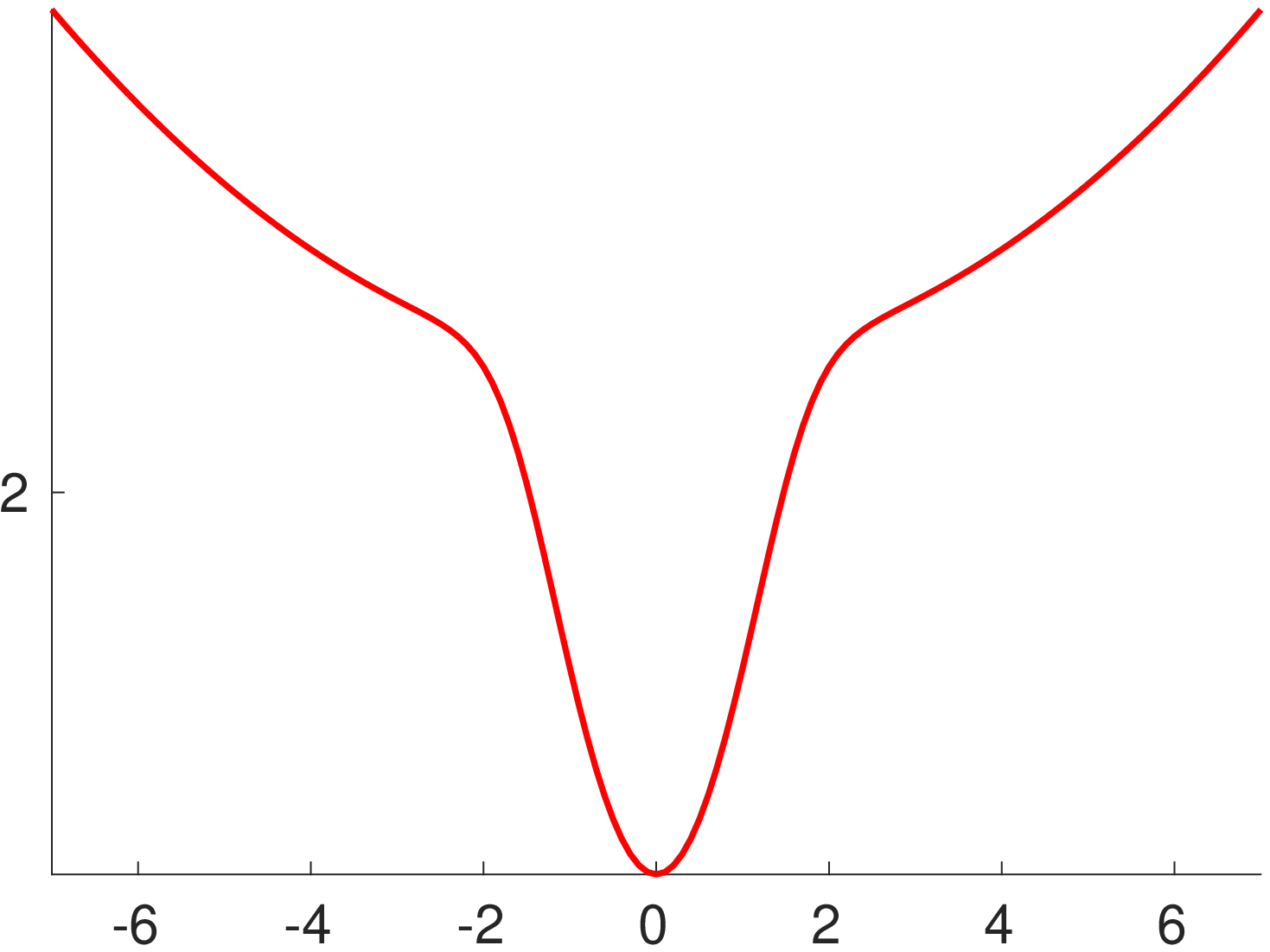}}
    \subfigure[Cauchy ($\delta=1$)]{\includegraphics[width=0.23\linewidth]{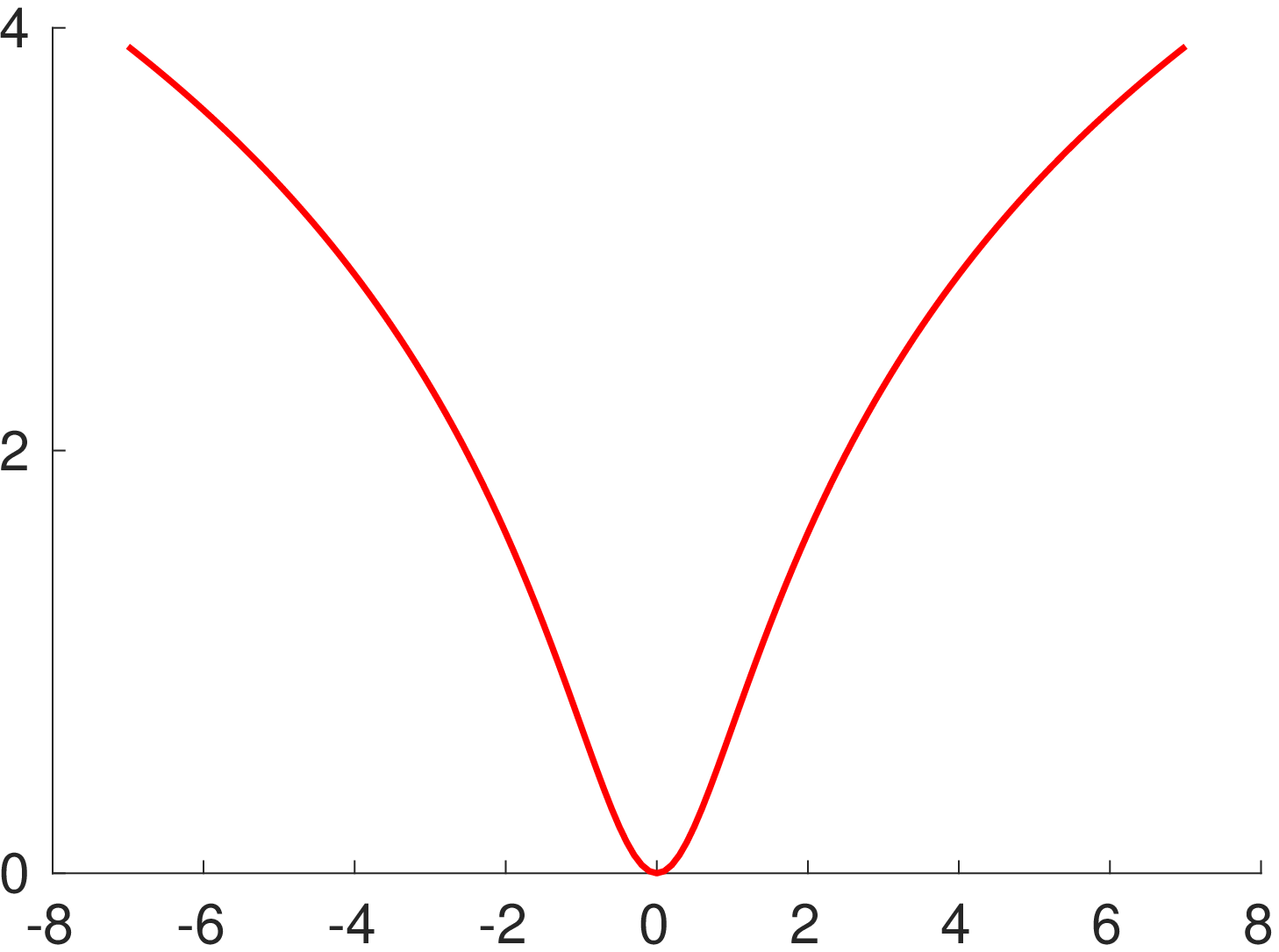}}
    \subfigure[Convex losses]{\includegraphics[width=0.23\linewidth]{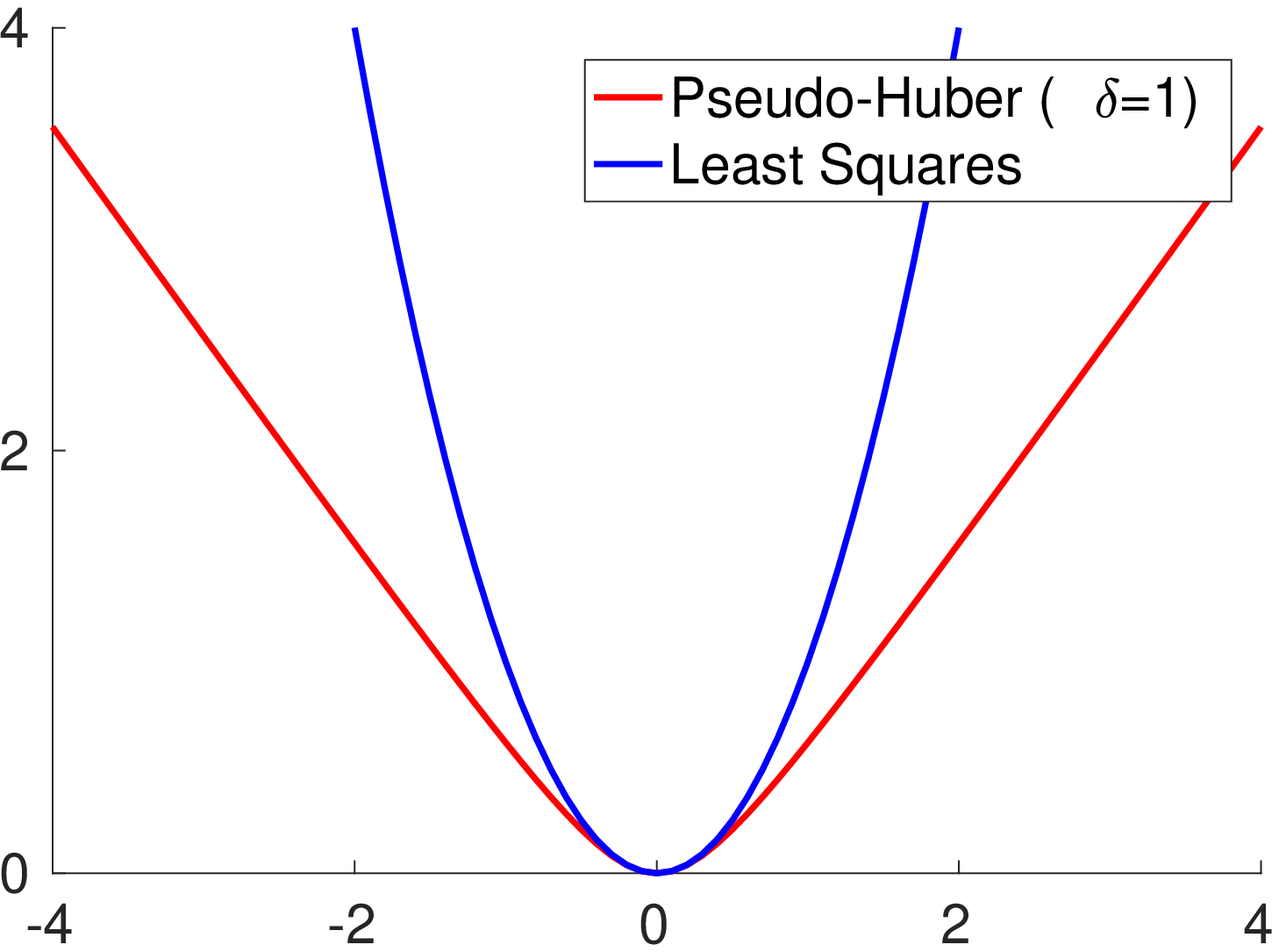}}
    \caption{Examples of convex and non-convex loss functions that satisfy Assumption \ref{ass}.}
\label{fig:example_of_loss}
\end{center}
\end{figure*}

As a motivation for our main result, we first analyze the case 
when the training samples are linearly independent, which requires $N\leq d+1.$ 
It can be seen as a generalization of Corollary 1 in \cite{Gori92}. 
\begin{theorem}\label{theo:independent_inputs}
    Let $\Phi:\P\to\RR$ be defined as in \eqref{eq:main1} and let the Assumptions \ref{ass} hold. 
    If the training samples are linearly independent, that is $\rank([X,\ones_N])=N$,
    then every critical point $(W_l,b_l)_{l=1}^L$ of $\Phi$ 
    for which the weight matrices $(W_l)_{l=2}^L$ have full column rank, that is 
    $\rank(W_l)=n_l$ for $l\in[2,L]$,  is a global minimum.
\end{theorem}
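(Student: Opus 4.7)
The plan is to propagate the critical-point gradient conditions backward through the network, showing $\Delta_k = 0$ for all $k \in [L]$, and then deduce global optimality from Assumption \ref{ass}(3). The chain of implications I aim to establish is $\Delta_1 = 0 \Rightarrow \Delta_2 = 0 \Rightarrow \cdots \Rightarrow \Delta_L = 0 \Rightarrow l'(F_L - Y) = 0$.

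First I would use the first-layer gradient conditions to kill $\Delta_1$. By Lemma \ref{lem:grad}, at a critical point $\nabla_{W_1}\Phi = X^T \Delta_1 = 0$ and $\nabla_{b_1}\Phi = \Delta_1^T \ones_N = 0$. Stacking these two blocks gives the single equation $[X,\ones_N]^T \Delta_1 = 0$. Since $\rank([X,\ones_N]) = N$ by hypothesis, the matrix $[X,\ones_N]^T \in \RR^{(d+1)\times N}$ has trivial null space when acting on columns in $\RR^N$, so each column of $\Delta_1$ vanishes and $\Delta_1 = 0$.

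Next I would run an induction on $k$, showing $\Delta_k = 0 \Rightarrow \Delta_{k+1} = 0$ for $k \in [L-1]$. By Lemma \ref{lem:grad}(1), $\Delta_k = (\Delta_{k+1} W_{k+1}^T) \circ \sigma'(G_k)$. Because $\sigma$ is analytic and strictly monotonically increasing (with strictly positive derivative for all the standard activations covered by Assumption \ref{ass}), $\sigma'(G_k)$ has strictly positive entries, so the Hadamard product vanishing forces $\Delta_{k+1} W_{k+1}^T = 0$. The full column rank hypothesis $\rank(W_{k+1}) = n_{k+1}$ makes $W_{k+1}^T W_{k+1} \in \RR^{n_{k+1}\times n_{k+1}}$ invertible, so right-multiplying $\Delta_{k+1}W_{k+1}^T = 0$ by $W_{k+1}(W_{k+1}^T W_{k+1})^{-1}$ yields $\Delta_{k+1} = 0$. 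Iterating from $k = 1$ up to $k = L-1$ gives $\Delta_L = 0$, and then Lemma \ref{lem:grad}(1) at layer $L$ reads $l'(F_L-Y) \circ \sigma'(G_L) = 0$; one more application of $\sigma'(G_L) > 0$ gives $l'(F_L-Y) = 0$ entrywise. By Assumption \ref{ass}(3), every entry of $F_L - Y$ then sits at a global minimizer of $l$, so every summand in \eqref{eq:main1} is individually minimized and the critical point is a global minimum of $\Phi$.

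The only genuine technical subtlety is ensuring the entrywise cancellation $(\Delta_{k+1}W_{k+1}^T)\circ \sigma'(G_k) = 0 \Rightarrow \Delta_{k+1}W_{k+1}^T = 0$, i.e., that $\sigma'(G_k)$ has no zero entries. For sigmoid, tanh, and softplus this is immediate; more generally, analyticity of $\sigma$ combined with strict monotonicity means the zero set of $\sigma'$ is discrete, so the argument can be extended by an analytic continuation along the rows of $\Delta_{k+1}W_{k+1}^T$ viewed as analytic functions of the network weights. This is the one place where the analyticity hypothesis in Assumption \ref{ass}(2) (and not just differentiability) is genuinely used, and I expect it to be the main technical wrinkle to isolate as a small lemma before running the induction above.
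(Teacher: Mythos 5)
Your argument is correct and is essentially the paper's own proof: $[X,\ones_N]^T\Delta_1=0$ together with $\rank([X,\ones_N])=N$ gives $\Delta_1=0$, the backward recursion $\Delta_k=(\Delta_{k+1}W_{k+1}^T)\circ\sigma'(G_k)$ combined with positivity of $\sigma'$ and full column rank of $W_{k+1}$ propagates this to $\Delta_L=0$, and Assumption \ref{ass}(3) applied to $l'(F_L-Y)=0$ finishes. The only divergence is your closing paragraph: the paper introduces no analyticity-based lemma here but simply asserts that $\sigma'$ is strictly positive (true for all the listed activations, and indeed the only point where this proof needs more than $\sigma\in C^1$), so the proposed analytic-continuation step should be dropped --- it would in any case be delicate, since a vanishing entry $\sigma'((G_k)_{ij})$ at the fixed critical point under consideration cannot be removed by viewing things as functions of the weights.
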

\ifpaper
\begin{proof}
  The proof is based on induction. At a critical point it holds $\nabla_{W_1}\Phi=X^T\Delta_1=0$
  and $\nabla_{b_1}\Phi=\Delta_1^T\ones_N=0$ thus $[X,\ones_N]^T\Delta_1=0.$
  By assumption, the data matrix $[X,\ones_N]^T\in\RR^{(d+1)\times N}$ has full column rank,
  this implies $\Delta_1=0.$
  Using induction, let us assume that $\Delta_k=0$ for some $1 \leq k \leq L-1$, then by Lemma \ref{lem:grad}, 
  we have $\Delta_k=(\Delta_{k+1}W_{k+1}^T) \circ \sigma'(G_{k})=0$. 
  As by assumption $\sigma'$ is strictly positive, this is equivalent to
  $\Delta_{k+1}W_{k+1}^T=0$ resp. $W_{k+1}\Delta_{k+1}^T=0$. 
  As by assumption $W_{k+1}$ has full column rank,
  it follows $\Delta_{k+1}=0$. Finally, we get $\Delta_L=0$. 
  With Lemma \ref{lem:grad} we thus get $l'(F_L-Y)\circ \sigma'(G_L)=0$ 
  which implies with the same argument as above $l'(F_L-Y)=0.$
  From our Assumption \ref{ass}, it holds that if $l'(a)=0$ then $a$ is a global minimum of $l.$
  Thus each individual entry of $(F_L-Y)$ must represent a global minimum of $l.$
  This combined with \eqref{eq:main1} implies that the critical point must be a global minimum of $\Phi.$
\end{proof}
\fi
Theorem \ref{theo:independent_inputs} implies that the weight matrices of potential saddle points 
or suboptimal local minima need to have low rank for one particular layer.
Note however that the set of low rank weight matrices in $\W$ has measure zero. 
At the moment we cannot prove that suboptimal low rank local minima cannot exist. However, it 
seems implausible that such suboptimal low rank local minima exist as every neighborhood of such points contains 
full rank matrices which increase the expressiveness of the network.
Thus it should be possible to use this degree of freedom to further reduce the loss, 
which contradicts the definition of a local minimum. 
Thus we conjecture that all local minima are indeed globally optimal.

The main restriction in the assumptions of Theorem \ref{theo:independent_inputs} 
is the linear independence of the training samples as it requires $N\leq d+1$, which is very restrictive in practice. 
We prove in this section a similar guarantee 
in our main Theorem \ref{theo:main} by implicitly transporting this condition to some higher layer. 
A similar guarantee has been proven by \cite{Yu95} for a single hidden layer network, 
whereas we consider general multi-layer networks.
The main ingredient of the proof of our main result is the observation in the following lemma.
\begin{lemma}\label{lem:zero_loss}
    Let $\Phi:\P\to\RR$ be defined as in \eqref{eq:main1} and let the Assumptions \ref{ass} hold. 
    Let $(W_l,b_l)_{l=1}^L \in\P$ be given. Assume there is some $k\in[L-1]$ s.t. the following holds
    \begin{enumerate}
	\item $\rank([F_k,\ones_N])=N$
	\item $\rank(W_l)=n_l, \, l\in[k+2,L]$
	\item $\nabla_{W_{k+1}} \Phi\Big( (W_l,b_l)_{l=1}^L \Big)=0$ \\
	      $\nabla_{b_{k+1}} \Phi\Big( (W_l,b_l)_{l=1}^L \Big)=0$ 
    \end{enumerate}
    then $(W_l,b_l)_{l=1}^L$ is a global minimum.
\end{lemma}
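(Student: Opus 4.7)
The plan is to mimic the induction in the proof of Theorem \ref{theo:independent_inputs}, but initiate it at layer $k+1$ instead of layer $1$, using condition 1 of the lemma to absorb the role previously played by linear independence of the inputs.

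First I would combine the two gradient conditions into a single matrix equation. By Lemma \ref{lem:grad} we have $\nabla_{W_{k+1}}\Phi = F_k^T \Delta_{k+1}$ and $\nabla_{b_{k+1}}\Phi = \Delta_{k+1}^T \ones_N$, so hypothesis 3 is equivalent to $[F_k,\ones_N]^T\Delta_{k+1}=0$. Hypothesis 1 says that the matrix $[F_k,\ones_N]\in\RR^{N\times (n_k+1)}$ has rank $N$, equivalently that $[F_k,\ones_N]^T$ has trivial kernel on the left, so the only $\Delta_{k+1}\in\RR^{N\times n_{k+1}}$ satisfying this equation is $\Delta_{k+1}=0$.

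Second, I would propagate $\Delta_{k+1}=0$ forward through the layers exactly as in Theorem \ref{theo:independent_inputs}. Using the recursion $\Delta_l = (\Delta_{l+1}W_{l+1}^T)\circ\sigma'(G_l)$ and the fact that $\sigma'>0$ (Assumption \ref{ass}), the equation $\Delta_l=0$ is equivalent to $W_{l+1}\Delta_{l+1}^T=0$; because $W_{l+1}$ has full column rank by hypothesis 2, this forces $\Delta_{l+1}=0$. Inducting on $l$ from $k+1$ up to $L-1$ gives $\Delta_L=0$.

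Third, I would convert $\Delta_L=0$ into optimality of the output. From $\Delta_L = l'(F_L-Y)\circ\sigma'(G_L)=0$ and $\sigma'>0$ we obtain $l'(F_L-Y)=0$ entrywise, and Assumption \ref{ass}(3) then guarantees that each entry of $F_L-Y$ attains the global minimum of $l$. Since $\Phi$ is, by \eqref{eq:main1}, a sum of $l$ applied entrywise to $F_L-Y$, this implies that $(W_l,b_l)_{l=1}^L$ is a global minimum of $\Phi$.

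There is no real obstacle here beyond correctly bookkeeping the indices; the whole argument is a verbatim adaptation of the proof of Theorem \ref{theo:independent_inputs}, with the key conceptual point being that the ``linear independence of the training data'' used there need not live at the input layer: it is enough that it be achieved at some intermediate feature representation $[F_k,\ones_N]$, which is exactly what hypothesis 1 provides. The subsequent lemmas and the main theorem will then presumably supply sufficient conditions under which some layer $k$ with $\rank([F_k,\ones_N])=N$ is automatically produced.
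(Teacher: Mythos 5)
Your proof is correct and follows exactly the paper's own argument: reduce the two gradient conditions to $[F_k,\ones_N]^T\Delta_{k+1}=0$, use the rank-$N$ hypothesis to conclude $\Delta_{k+1}=0$, and then run the same forward induction as in Theorem \ref{theo:independent_inputs} to reach $\Delta_L=0$ and global optimality. The only difference is that you spell out the induction and final step that the paper compresses into a reference to the earlier proof.
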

\ifpaper
\begin{proof}
    By Lemma \ref{lem:grad} it holds that 
    \begin{align*}
	\nabla_{W_{k+1}} \Phi =F_k^T\Delta_{k+1} =0, \quad
	\nabla_{b_{k+1}} \Phi =\Delta_{k+1}^T\ones_N =0, 
    \end{align*}
    which implies $[F_k,\ones_N]^T \Delta_{k+1}=0.$
    By our assumption, $\rank([F_k,\ones_N])=N$ it holds that $\Delta_{k+1}=0.$
    Since $\rank(W_l)=n_l, l\in[k+2,L]$, we can apply a similar induction argument
    as in the proof of Theorem \ref{theo:independent_inputs}, 
    to arrive at $\Delta_L=0$ and thus a global minimum.
\end{proof}
\fi
The first condition of Lemma \ref{lem:zero_loss} can be seen as a generalization of the requirement 
of linearly independent training inputs in Theorem \ref{theo:independent_inputs}
to a condition of linear independence of the feature vectors at a hidden layer. 
Lemma \ref{lem:zero_loss} suggests that if we want to make statements about the global optimality of critical points, 
it is sufficient to know when and which critical points fulfill these conditions.
The third condition is trivially satisfied by a critical point 
and the requirement of full column rank of the weight matrices is similar to  Theorem \ref{theo:independent_inputs}.
However, the first one may not be fulfilled since $\rank([F_k,\ones_N])$ is dependent not only on the weights but also on the architecture. 
The main difficulty of the proof of our following
main theorem is to prove that this first condition holds under the rather simple requirement that $n_k\geq N-1$
for a subset of all critical points.

But before we state the theorem we have to discuss a particular notion of non-degenerate critical point.
\begin{definition}[Block Hessian]
    Let $f:D\to\RR$ be a twice-continuously differentiable function defined on some open domain $D\subseteq\RR^n.$
    The Hessian w.r.t. a subset of variables $S\subseteq\Set{x_1,\ldots,x_n}$ 
    is denoted as $\nabla^2_S f(x)\in\RR^{|S|\times|S|}.$
    When $|S|=n$, we write $\nabla^2 f(x)\in\RR^{n\times n}$ to denote the full Hessian matrix.
\end{definition}
We use this to introduce a slightly more general notion of non-degenerate critical point.
\begin{definition}[Non-degenerate critical point]\label{def:non-degenerate}
    Let $f:D\to\RR$ be a twice-continuously differentiable function defined on some open domain $D\subseteq\RR^n.$
    Let $x\in D$ be a critical point, \ie $\nabla f(x)=0$, then
    \begin{itemize}
	  \item $x$ is non-degenerate for a subset of variables $S\subseteq\Set{x_1,\ldots,x_n}$
	  if $\nabla^2_S f(x)$ is non-singular.
	  \item $x$ is non-degenerate if $\nabla^2 f(x)$ is non-singular.
    \end{itemize}
\end{definition}
Note that a non-degenerate critical point might not be non-degenerate for a subset of variables, 
and vice versa, if it is non-degenerate on a subset of variables it does not necessarily imply non-degeneracy on the whole set.
For instance,
$$
\nabla^2f(x)=
\begin{array}{cc|cc}
    1&0&0&0\\ 
    0&1&0&0\\\hline
    0&0&0&0\\
    0&0&0&0\\
\end{array},\quad
\nabla^2f(y)=
\begin{array}{cc|cc}
    1&0&1&0\\ 
    0&1&0&1\\\hline
    1&0&0&0\\
    0&1&0&0\\
\end{array}
$$
Clearly, $\det{\nabla^2f(x)}=0$ but $\det{\nabla^2_{\Set{x_1,x_2}}}f(x)\neq 0,$
and $\det{\nabla^2f(y)}\neq 0$ but $\det{\nabla^2_{\Set{y_3,y_4}}}f(y)= 0.$
The concept of non-degeneracy on a subset of variables is crucial for the following statement of our main result.
\begin{theorem}\label{theo:main}
    Let $\Phi:\P\to\RR$ be defined as in \eqref{eq:main1} and let the Assumptions \ref{ass} hold. 
    Suppose $n_k\geq N-1$ for some $k\in[L-1].$
    Then every critical point $(W^*_l,b^*_l)_{l=1}^L$ of $\Phi$ which satisfies the following conditions
    \begin{enumerate} 
	\item $(W^*_l,b^*_l)_{l=1}^L$ is non-degenerate on $\Setbar{(W_l,b_l)}{l\in \mathcal{I}}$,
	for some subset $\mathcal{I}\subseteq\Set{k+1,\ldots,L}$ satisfying $\Set{k+1}\in\mathcal{I},$
	\item $(W^*_l)_{l=k+2}^L$ has full column rank, that is, $\rank(W^*_l)=n_l$ for $l\in[k+2,L]$,
   \end{enumerate}
    is a global minimum of $\Phi.$
\end{theorem}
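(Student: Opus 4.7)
The strategy is to reduce the theorem to Lemma \ref{lem:zero_loss}. The only missing ingredient of that lemma, given the hypotheses of Theorem \ref{theo:main}, is the first condition $\rank([F_k,\ones_N])=N$ at the critical point $(W_l^*,b_l^*)_{l=1}^L$; the full column rank of $W_l^*$ for $l \in [k+2,L]$ is assumed, and the vanishing of $\nabla_{W_{k+1}}\Phi$ and $\nabla_{b_{k+1}}\Phi$ is part of being a critical point. So the whole task is to deduce the rank condition at layer $k$, and this is where the non-degeneracy hypothesis on $\mathcal{I}$ will be used.

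Since $[F_k,\ones_N] \in \RR^{N\times(n_k+1)}$ with $n_k+1 \geq N$, I will argue by contradiction: suppose $\rank([F_k,\ones_N])<N$. Because this matrix has $n_k+1 \geq N > \rank$ columns, the columns must be linearly dependent, so there exists a nonzero $(u,v)\in\RR^{n_k}\times\RR$ with $F_k u + v\ones_N = 0$. I then construct a perturbation direction $d^*$ living entirely in the $(W_{k+1},b_{k+1})$ block: place $u$ in the first column of $W_{k+1}$ and $v$ in the first entry of $b_{k+1}$, with zeros elsewhere; extend by zeros in all other layers of $\mathcal{I}$. Under the perturbation $\theta^* \mapsto \theta^* + t d^*$, the preactivation at layer $k+1$ changes by $t(F_k u + v\ones_N)$ in its first column and nowhere else, hence does not change at all. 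Consequently $G_{k+1}, F_{k+1}, G_{k+2}, \ldots, F_L$ are all independent of $t$, so $\Phi$ is constant along $d^*$.

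The key upgrade of this observation is to show that $d^*$ lies in the kernel of the Hessian $\nabla^2_{\mathcal{I}}\Phi(\theta^*)$, not merely that $d^{*T}\nabla^2_{\mathcal{I}}\Phi(\theta^*)d^* = 0$. For this, note that for any direction $e$ in the $\mathcal{I}$-variables, perturbing simultaneously by $t d^*$ and $s e$ only changes $G_{k+1}$ through the $s$-part (since the $t$-contribution to $G_{k+1}$ still vanishes, and the layers $> k+1$ in $\mathcal{I}$ do not affect $G_{k+1}$ either). So $\Phi(\theta^* + td^* + se) = \Phi(\theta^* + se)$ for all $s,t$. Differentiating once in $s$ gives $\nabla_{\mathcal{I}}\Phi(\theta^* + td^*) = \nabla_{\mathcal{I}}\Phi(\theta^*)$ for all $t$, and differentiating once in $t$ yields $\nabla^2_{\mathcal{I}}\Phi(\theta^*)d^* = 0$. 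Since $d^* \neq 0$ and $\{k+1\} \in \mathcal{I}$ ensures this direction lies inside the restricted parameter space, the Hessian $\nabla^2_{\mathcal{I}}\Phi(\theta^*)$ is singular, contradicting hypothesis~1.

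The main obstacle is precisely this last bridge: translating a rank deficiency of $[F_k,\ones_N]$ into an honest null vector (not just an isotropic direction) of the block Hessian on $\mathcal{I}$. It works because the perturbation $d^*$ annihilates $G_{k+1}$ in a way that is \emph{independent} of any other variation inside $\mathcal{I}$, so the corresponding row of the Hessian is identically zero. Once the rank condition is secured, invoking Lemma \ref{lem:zero_loss} closes the argument and concludes that $(W_l^*,b_l^*)_{l=1}^L$ is a global minimum of $\Phi$.
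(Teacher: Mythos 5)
Your proof is correct, and it takes a genuinely different and more elementary route than the paper. The paper never establishes $\rank([F_k,\ones_N])=N$ at the critical point itself: it shows via analyticity of $F_k$ and Lemma \ref{lem:exist} that the set of lower-layer parameters with $\rank([F_k,\ones_N])<N$ has Lebesgue measure zero (Lemma \ref{lem:zero_measure}), then uses the non-degeneracy hypothesis through the implicit function theorem to follow a curve $u\mapsto(u,\alpha(u))$ of points that stay critical for the upper layers and retain the full-column-rank property, applies Lemma \ref{lem:zero_loss} at nearby points on this curve where the rank condition does hold, and finally passes to the limit using continuity of $\Phi$ and $\alpha$. You instead use the non-degeneracy hypothesis directly: a column dependence $F_k u+v\ones_N=0$ of $[F_k,\ones_N]$ yields an explicit direction $d^*$ supported on $(W_{k+1},b_{k+1})$ along which $G_{k+1}$, and hence $\Phi$, is unaffected even under simultaneous perturbation of the other $\mathcal{I}$-variables, so $d^*$ is an honest kernel vector of $\nabla^2_{\mathcal{I}}\Phi$ at the critical point --- contradiction; and since the matrix has $n_k+1\geq N$ columns, $\rank<N$ does force such a dependence, so the rank condition holds at the critical point itself and Lemma \ref{lem:zero_loss} finishes. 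Your argument is shorter, avoids analyticity, measure theory and the implicit function theorem entirely for this step (only $\Phi\in C^2$ is needed), and gives the stronger conclusion that the hypotheses already imply $\rank([F_k,\ones_N])=N$ at the critical point rather than merely at nearby points. What the paper's heavier machinery buys is the genericity statements (Lemmas \ref{lem:exist} and \ref{lem:zero_measure}, Corollary \ref{cor:perturb}), which are of independent interest and support the paper's broader ``almost all critical points'' narrative. One consequence your argument makes transparent, and which is worth flagging: non-degeneracy actually forces the columns of $[F_k,\ones_N]$ to be linearly independent, i.e.\ $\rank = n_k+1\leq N$, so together with $n_k\geq N-1$ the hypotheses can only be satisfied when $n_k=N-1$; for $n_k\geq N$ the block Hessian on any $\mathcal{I}\ni k+1$ is always singular and the theorem holds vacuously.
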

First of all we note that the full column rank condition of $(W_l)_{l=k+2}^L$ in Theorem \ref{theo:independent_inputs}, 
and \ref{theo:main} implicitly requires that $n_{k+1}\geq n_{k+2}\geq\ldots\geq n_L.$
This means the network needs to have a pyramidal structure from layer $k+2$ to $L$. 
It is interesting to note that most modern neural network architectures have a pyramidal structure from some layer, 
typically the first hidden layer, on. 
Thus this is not a restrictive requirement. 
Indeed, one can even argue that Theorem \ref{theo:main} gives an implicit justification
as it hints on the fact that such networks are easy to train if one layer is sufficiently wide.

Note that Theorem \ref{theo:main} does not require fully non-degenerate critical points  
but non-degeneracy is only needed for some subset of variables that includes layer $k+1$. 
As a consequence of Theorem \ref{theo:main}, we get directly a stronger result for non-degenerate local minima.
\begin{corollary}\label{cor:nondegenerate_hessian}
    Let $\Phi:\P\to\RR$ be defined as in \eqref{eq:main1} and let the Assumptions \ref{ass} hold. 
    Suppose $n_k\geq N-1$ for some $k\in[L-1]$. 
    Then every non-degenerate local minimum $(W^*_l,b^*_l)_{l=1}^L$ of $\Phi$ for which
    $(W^*_l)_{l=k+2}^L$ has full column rank, that is $\rank(W^*_l)=n_l$, 
    is a global minimum of $\Phi.$
\end{corollary}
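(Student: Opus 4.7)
The plan is to reduce the corollary directly to Theorem \ref{theo:main} by choosing a suitable subset $\mathcal{I}$ and verifying the non-degeneracy-on-a-subset condition from the hypotheses.

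First I would pick the simplest allowed subset, namely $\mathcal{I} = \{k+1\}$, which satisfies $k+1 \in \mathcal{I} \subseteq \{k+1,\ldots,L\}$ trivially. With this choice, Theorem \ref{theo:main} reduces to two assumptions: the full column rank condition on $(W^*_l)_{l=k+2}^L$ (already given by the corollary), and the non-degeneracy of $\nabla^2_{(W_{k+1},b_{k+1})}\Phi$ at the critical point. The only thing left to establish is the latter.

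The key observation is that a non-degenerate local minimum has a positive definite (not merely non-singular) full Hessian. Indeed, at a local minimum $\nabla^2 \Phi$ must be positive semidefinite by the standard second-order necessary condition; combined with the non-degeneracy hypothesis (no zero eigenvalues), this forces $\nabla^2 \Phi$ to be strictly positive definite. This is precisely where the corollary exploits more than generic non-degeneracy: the example in the paper showing that a non-singular full Hessian can have a singular principal submatrix relied on indefinite matrices, which cannot occur at a local minimum.

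Then I would invoke the well-known fact that every principal submatrix of a positive definite matrix is itself positive definite, hence non-singular. Applying this to the submatrix indexed by the coordinates of $(W_{k+1},b_{k+1})$ yields that $\nabla^2_{(W_{k+1},b_{k+1})}\Phi$ is non-singular at $(W^*_l,b^*_l)_{l=1}^L$. Thus the point is non-degenerate on the subset $\{(W_{k+1},b_{k+1})\}$ in the sense of Definition \ref{def:non-degenerate}, so all hypotheses of Theorem \ref{theo:main} are met, and we conclude that the local minimum is global. There is no real obstacle here; the entire content of the corollary is the passage from "Hessian PSD and non-singular" to "Hessian PD, hence all principal minors non-singular", after which Theorem \ref{theo:main} does the rest.
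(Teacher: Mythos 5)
Your proposal is correct and follows essentially the same route as the paper: the Hessian at a non-degenerate local minimum is positive definite (PSD by second-order necessity plus non-singularity), every principal submatrix of a positive definite matrix is positive definite and hence non-singular, and then Theorem \ref{theo:main} applies. The only cosmetic difference is that you take $\mathcal{I}=\{k+1\}$ while the paper takes $\mathcal{I}=\{k+1,\ldots,L\}$; both choices are admissible and the argument is otherwise identical.
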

\ifpaper
\begin{proof}
The Hessian at a non-degenerate local minimum is positive definite and 
every principal submatrix of a positive definite matrix is again positive definite, 
in particular for the subset of variables $(W_l,b_l)_{l=k+1}^L$. 
Then application of Theorem \ref{theo:main} yields the result.
\end{proof}
\fi 

Let us discuss the implications of these results. First, note that Theorem \ref{theo:main} is slightly weaker than Theorem \ref{theo:independent_inputs} as it requires also non-degeneracy wrt to a set of variables including layer $k+1$. Moreover, similar to Theorem \ref{theo:independent_inputs}  it does not exclude the possibility of suboptimal local minima of
low rank in the layers ``above'' layer $k+1$. 
On the other hand it makes also very strong statements. 
In fact, if $n_k \geq N-1$ for some $k \in [L-1]$ then 
even degenerate saddle points/local maxima are excluded as long as they are non-degenerate with respect 
to any subset of parameters of upper layers that include layer $k+1$ and the rank condition holds. 
Thus given that  the weight matrices of the upper layers have full column rank , 
there is not much room left for degenerate saddle points/local maxima. 
Moreover, for a one-hidden-layer network for which $n_1\geq N-1$, 
\emph{every} non-degenerate critical point with respect to the output layer parameters is a global minimum, 
as the full rank condition is not active for one-hidden layer networks. 

Concerning the non-degeneracy condition of main Theorem \ref{theo:main}, 
one might ask how likely it is to encounter degenerate points of a smooth function. 
This is answered by an application of Sard's/Morse theorem in \cite{Mil1965}.
\begin{theorem}[A. Morse, p.11]
If $f:U \subset \RR^d \rightarrow \RR$ is twice continuously differentiable. Then for almost all $w \in \RR^d$ with respect to the Lebesgue measure it holds that
$f'$ defined as $f'(x)=f(x)+\inner{w,x}$ has only non-degenerate critical points.
\end{theorem}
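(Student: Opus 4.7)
The plan is to reduce the statement to the standard Sard theorem applied to the gradient map of $f$. Define $g \bydef \nabla f : U \to \RR^d$. Since $f$ is twice continuously differentiable, $g$ is $C^1$ and its Jacobian at $x$ equals the Hessian $\nabla^2 f(x)$. Observe that $x$ is a critical point of $f'(x)=f(x)+\inner{w,x}$ precisely when $\nabla f(x) = -w$, and this critical point is degenerate precisely when $\nabla^2 f'(x) = \nabla^2 f(x)$ is singular, i.e.\ when $x$ is a critical point of the map $g$ in the Sard-theoretic sense (Jacobian not of full rank).

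Next, I would invoke Sard's theorem for the $C^1$ map $g : U \to \RR^d$, which asserts that the set of critical values
\[
g\bigl(\{\,x \in U : \det(\nabla^2 f(x)) = 0\,\}\bigr) \subset \RR^d
\]
has Lebesgue measure zero. Call this set $C$. Then for every $w \in \RR^d$ with $-w \notin C$, which is a full-measure condition on $w$, no point $x \in U$ can simultaneously satisfy $\nabla f(x) = -w$ and have singular Hessian. Consequently, every critical point of $f'$ is non-degenerate, which is exactly the claim.

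The main subtlety is the applicability of Sard's theorem. In its simplest form, Sard requires $C^k$ regularity with $k > \max(0, n-m)$; here the source and target both have dimension $d$, so one needs $k > 0$, i.e.\ only $g \in C^1$, which follows from the assumption $f \in C^2$. Thus the regularity assumption in the theorem is exactly what is needed, and no additional argument is required beyond quoting Sard. Modulo citing Sard's theorem, there is no hard computation; the heart of the proof is simply recognizing that perturbing $f$ by a linear term $\inner{w,x}$ shifts the gradient by $w$, which turns the question about degenerate critical points into a question about critical values of $\nabla f$.
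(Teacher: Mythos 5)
Your proof is correct, and it is in fact the standard argument: the paper itself states this result without proof, citing Milnor's book (where the lemma of A.~Morse is proved exactly this way, by applying Sard's theorem to the gradient map $\nabla f$ and observing that degenerate critical points of $f(x)+\inner{w,x}$ correspond to $-w$ being a critical value of $\nabla f$). Your remark on regularity is also right: since source and target both have dimension $d$, the $C^1$ version of Sard's theorem suffices, which is exactly what $f\in C^2$ provides.
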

Note that the theorem would still hold if one would draw $w$ uniformly at random from the set $\{z \in \RR^d\,|\, \norm{z}_2\leq \epsilon\}$ for any $\epsilon>0$. 
Thus almost every linear perturbation $f'$ of a function $f$ will lead to the fact all of its critical points are non-degenerate. 
Thus, this result indicates that exact degenerate points might be rare. 
Note however that in practice the Hessian at critical points can be close to singular (at least up to numerical precision), 
which might affect the training of neural networks negatively \cite{Levent17}.

As we argued for Theorem  \ref{theo:independent_inputs} our main Theorem \ref{theo:main} 
does not exclude the possibility of suboptimal degenerate local minima or suboptimal local minima of low rank. 
However, we conjecture that the second case cannot happen as every neighborhood of the local minima contains full rank
matrices which increase the expressiveness of the network and this additional flexibility 
can be used to reduce the loss which contradicts the definition of a local minimum.

As mentioned in the introduction the condition $n_k \geq N-1$ looks at first sight very strong. However, as mentioned in the introduction,
in practice often networks are used where one hidden layer is rather wide, that is $n_k$ is on the order of $N$ (typically it is the first layer of the network). As the condition of Theorem \ref{theo:main} is sufficient and not necessary, one can expect out of continuity reasons that the loss surface of networks where the condition is approximately true, is still rather well behaved,
in the sense that still most local minima are indeed globally optimal and the suboptimal ones are not far away from the globally optimal ones.



\section{Proof of Main Result}\label{sec:proof} 
For better readability, we first prove our main Theorem \ref{theo:main}
for a special case where $\mathcal{I}$ is the whole set of upper layers,
\ie $\mathcal{I}=\Set{k+1,\ldots,L},$
and then show how to extend the proof to the general case where $\mathcal{I}\subseteq\Set{k+1,\ldots,L}.$
Our proof strategy is as follows. 
We first show that the output of each layer are real analytic functions of network parameters.
Then we prove that there exists a set of parameters such that $\rank([F_k,\ones_N])=N.$
Using properties of real analytic functions, we conclude that the set of parameters where $\rank([F_k,\ones_N])<N$ has measure zero.
Then with the non-degeneracy condition, we can apply the implicit-function theorem to conclude that 
even if $\rank([F_k,\ones_N])=N$ is not true at a critical point, 
then still in any neighborhood of it there exists a point where the conditions of Lemma \ref{lem:zero_loss}
are true and the loss is minimal. 
By continuity of $\Phi,$ this implies that the loss must also be minimal at the critical point.

We introduce some notation frequently used in the proofs.
Let $B(x,r)=\{ z \in \RR^d\,|\, \norm{x-z}_2 < r\}$ be the open ball in $\RR^d$ of radius $r$ around $x$. 
\begin{lemma}\label{lem:analytic}
If the Assumptions \ref{ass} hold, then the output of each layer $f_l$ for every $l\in[L]$ 
are real analytic functions of the network parameters on $\mathcal{\P}.$
\end{lemma}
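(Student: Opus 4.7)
The plan is to prove the statement by induction on the layer index $l$, relying on the closure of the class of real analytic functions under sums, products, and composition (Proposition 1.4.2 in \cite{KraPar2002}), together with the hypothesis from Assumption \ref{ass} that $\sigma$ is real analytic on $\RR$. Throughout, we regard $f_l(x)$ for each fixed training input $x$ as a function of the parameters $\theta=(W_1,b_1,\ldots,W_L,b_L)\in\P$, and we want to show $\theta\mapsto f_l(x)$ is real analytic on the open set $\P$ (a finite-dimensional real vector space).

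For the base case $l=0$ we simply observe that $f_0(x)=x$ does not depend on $\theta$ at all, so it is trivially real analytic on $\P$. For the inductive step, suppose every component of $f_{l-1}(x)$ is a real analytic function of $\theta$ on $\P$. Each component of the preactivation
\[
g_l(x) \;=\; W_l^T f_{l-1}(x) + b_l
\]
is a finite sum of terms of the form $(W_l)_{ij}\,(f_{l-1}(x))_i$ plus a component of $b_l$. Since the entries of $W_l$ and $b_l$ are coordinate functions on $\P$ (hence real analytic) and finite sums and products of real analytic functions are real analytic, each component of $g_l(x)$ is real analytic on $\P$. Composing with $\sigma$, which is real analytic on all of $\RR$ by Assumption \ref{ass}, and invoking the fact that the composition of real analytic functions is real analytic, we conclude that each component of $f_l(x)=\sigma(g_l(x))$ is real analytic on $\P$, which closes the induction.

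The only mildly delicate points are bookkeeping ones: (i) the closure properties need to be applied to real analytic functions in finitely many real variables (the coordinates of $\theta$), which is exactly the setting covered in \cite{KraPar2002}; and (ii) the composition step requires that $g_l(x)$ takes values in the domain of analyticity of $\sigma$, but since $\sigma$ is assumed analytic on all of $\RR$ this is automatic. There is no real obstacle here; the lemma is essentially a direct consequence of the analyticity of $\sigma$ and the recursive definition of the network, and the argument will be short.
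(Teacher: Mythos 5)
Your proposal is correct and follows essentially the same route as the paper's proof: both rely on the analyticity of $\sigma$ together with closure of real analytic functions under addition, multiplication, and composition (the paper cites Prop.\ 2.2.2 and 2.2.8 of \cite{KraPar2002}), with your version merely making the layer-by-layer induction explicit.
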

\begin{proof}
Any linear function is real analytic and 
the set of real analytic functions is closed under addition, multiplication and composition, 
see e.g. Prop. 2.2.2 and Prop. 2.2.8 in \cite{KraPar2002}.
As we assume that the activation function is real analytic, 
we get that all the output functions of the neural network $f_k$ 
are real analytic functions of the parameters as compositions of real analytic functions. 
\end{proof}

The concept of real analytic functions is important in our proofs
as these functions can never be ``constant'' in a set of the parameter space which has positive measure
unless they are constant everywhere. This is captured by the following lemma.
\begin{lemma}\cite{Dan15,Boris15}\label{lem:zeros_of_analytic}
    If $f:\RR^n\to\RR$ is a real analytic function which is not identically zero
    then the set $\Setbar{x\in\RR^n}{f(x)=0}$ has Lebesgue measure zero.
\end{lemma}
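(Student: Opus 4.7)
The plan is to prove this by induction on the dimension $n$, combining the identity theorem for one-variable real analytic functions with Fubini's theorem.

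For the base case $n=1$, a real analytic function on $\RR$ that is not identically zero has only isolated zeros: around any zero $x_0$, the Taylor expansion cannot have all coefficients vanish (otherwise $f$ would vanish identically on the connected set $\RR$), so the first nonzero coefficient gives a factorization $f(x) = (x-x_0)^k h(x)$ with $h(x_0) \neq 0$, showing $x_0$ is isolated. An isolated set of points in $\RR$ is countable, hence has Lebesgue measure zero.

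For the inductive step, assume the statement holds in dimension $n-1$ and let $f:\RR^n\to\RR$ be real analytic with $f\not\equiv 0$. Write $x=(x',x_n)$ with $x'\in\RR^{n-1}$. By Fubini's theorem, it suffices to show that the set
\[
A = \Setbar{x'\in\RR^{n-1}}{f(x',\cdot)\equiv 0 \text{ on } \RR}
\]
has measure zero in $\RR^{n-1}$, because for $x'\notin A$ the one-variable base case gives that the slice $\{x_n : f(x',x_n)=0\}$ has measure zero. For $x' \in A$, every derivative in the $x_n$-direction vanishes at $x_n=0$, so the real analytic functions $g_k(x') := \frac{\partial^k f}{\partial x_n^k}(x',0)$ on $\RR^{n-1}$ satisfy $g_k(x')=0$ for all $k\geq 0$.

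The key step is to argue that at least one $g_{k^*}$ is not identically zero. If all $g_k$ were identically zero on $\RR^{n-1}$, then for every fixed $x'$ the one-variable analytic function $x_n\mapsto f(x',x_n)$ would have all Taylor coefficients at $x_n=0$ equal to zero; by the one-variable identity theorem applied on the connected set $\RR$, this would force $f(x',x_n)=0$ for all $x_n$, hence $f\equiv 0$ on $\RR^n$, contradicting our hypothesis. Picking such a $k^*$, we have $A\subseteq \{x'\in\RR^{n-1} : g_{k^*}(x')=0\}$, and by the inductive hypothesis the latter set has measure zero, hence so does $A$. This completes the induction. The main subtlety is this identity-theorem argument that converts a global non-vanishing hypothesis on $f$ into the existence of a particular non-trivial derivative $g_{k^*}$ whose zero set controls the exceptional slice set.
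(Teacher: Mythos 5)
The paper does not prove this lemma itself; it is quoted from the cited references, so there is no internal proof to compare against. Your argument is correct and self-contained, and it is essentially the standard proof found in those references: the base case via isolated zeros of one-variable analytic functions, and the inductive step via Fubini applied to the (closed, hence measurable) zero set, with the exceptional slice set $A$ controlled by the zero set of some not-identically-vanishing derivative $g_{k^*}(x')=\frac{\partial^k f}{\partial x_n^k}(x',0)$. The one point worth making explicit is that each $g_k$ is real analytic on $\RR^{n-1}$ (derivatives of real analytic functions are real analytic, and so is the restriction to the hyperplane $x_n=0$), which you use implicitly when invoking the inductive hypothesis; with that noted, the proof is complete.
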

In the next lemma we show that there exist network parameters such that $\rank([F_k,\ones_N])=N$ holds if $n_k \geq N-1$. 
Note that this is only possible due to the fact that one uses non-linear activation functions. 
For deep linear networks, 
it is not possible for $F_k$ to achieve maximum rank if the layers below it are not sufficiently wide.
To see this, one considers $F_k=F_{k-1}W_k + \ones_N b_k^T$ for a linear network, 
then $\rank(F_k)\leq\min\{\rank(F_{k-1}), \rank(W_k)\}+1$ 
since the addition of a rank-one term does not increase the rank of a matrix by more than one.
By using induction, one gets $\rank(F_k)\leq \rank(W_l)+k-l+1$ for every $l\in[k].$

The existence of network parameters where $\rank([F_k,\ones_N])=N$ 
together  with the previous lemma will then be used to show that the set of network parameters 
where  $\rank([F_k,\ones_N])<N$ has measure zero.

%
%
\begin{lemma}\label{lem:exist}
    If the Assumptions \ref{ass} hold and $n_k\geq N-1$ for some $k\in[L-1]$, 
    then there exists at least one set of parameters $(W_l,b_l)_{l=1}^k$ such that $\rank([F_k,\ones_N])=N.$
\end{lemma}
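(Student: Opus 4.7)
The plan is to construct $(W_l,b_l)_{l=1}^k$ in two stages. Stage one ensures, via the nonlinearity and strict monotonicity of $\sigma$, that the feature vectors $y_i:=f_{k-1}(x_i)$ are pairwise distinct. Stage two then uses $n_k\geq N-1$ together with the real analyticity of $\sigma$ to force $\rank([F_k,\ones_N])=N$.

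For stage one, proceed by induction on $l$: given that $f_{l-1}(x_1),\ldots,f_{l-1}(x_N)$ are pairwise distinct (true for $l=1$ by Assumption \ref{ass}(1)), the set of directions $u\in\RR^{n_{l-1}}$ where $u^T(f_{l-1}(x_i)-f_{l-1}(x_j))=0$ for some $i\neq j$ is a finite union of proper hyperplanes, so I can pick $u_l$ outside this union. Placing $u_l$ in the first column of $W_l$ and setting all other entries of $W_l$ and $b_l$ to zero, the first coordinate of $f_l(x_i)$ equals $\sigma(u_l^T f_{l-1}(x_i))$, and these are $N$ distinct values by strict monotonicity, so the $f_l(x_i)$ are pairwise distinct. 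Iterating yields the desired $y_i$ (stage one is vacuous when $k=1$, in which case I take $y_i:=x_i$). For stage two, pick $u\in\RR^{n_{k-1}}$ so that $s_i:=u^T y_i$ are pairwise distinct and take every column of $W_k$ equal to $u$; then $(F_k)_{ij}=\sigma(s_i+(b_k)_j)$, and since $n_k\geq N-1$ it suffices to find biases $\beta_1,\ldots,\beta_{N-1}$ making the $N\times N$ matrix $\tilde M$ with $\tilde M_{ij}=\sigma(s_i+\beta_j)$ for $j\leq N-1$ and $\tilde M_{iN}=1$ nonsingular.

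The determinant $D(\beta):=\det\tilde M$ is real analytic in $\beta\in\RR^{N-1}$ by Lemma \ref{lem:analytic}, so by Lemma \ref{lem:zeros_of_analytic} it suffices to show $D\not\equiv 0$. Assume for contradiction $D\equiv 0$. Then all partial derivatives of $D$ at $\beta=0$ vanish, and a Leibniz expansion identifies each with $\det\bigl[\sigma^{(k_1)}(s_i),\ldots,\sigma^{(k_{N-1})}(s_i),1\bigr]_{i=1}^N$ for some nonnegative integers $k_j$. The vanishing of all these determinants is equivalent to the family $\{(\sigma^{(n)}(s_i))_{i=1}^N:n\geq 0\}\cup\{\ones_N\}$ spanning a proper subspace of $\RR^N$, giving a nonzero $c\in\RR^N$ with $\sum_i c_i=0$ and $\sum_i c_i\sigma^{(n)}(s_i)=0$ for all $n\geq 0$. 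The analytic function $h(t):=\sum_i c_i\sigma(s_i+t)$ has every Taylor coefficient at $0$ equal to zero, so $h\equiv 0$ on $\RR$.

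The main obstacle is the final step: ruling out $h\equiv 0$ with $c\neq 0$ under Assumption \ref{ass}. Heuristically, Assumption \ref{ass}(2) prevents $\sigma$ from being expressible as a combination of exponentials whose exponents $\lambda$ solve the characteristic equation $\sum_i c_i e^{\lambda s_i}=0$: under (2a), boundedness forbids $\mathrm{Re}(\lambda)\neq 0$, so $\sigma$ would reduce to an oscillatory trigonometric sum that cannot be strictly monotone; under (2b), the admissible exponents would simultaneously have to satisfy $\mathrm{Re}(\lambda)\leq 0$ (to match linear growth for $t\geq 0$) and $\mathrm{Re}(\lambda)\geq\rho_2>0$ (to match exponential decay for $t<0$), which is impossible. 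For the concrete activations of the paper (sigmoid, tanh, softplus), the identity can also be handled directly: substituting $u=e^{-t}$ and clearing denominators turns $h\equiv 0$ into a polynomial identity in $u$, and evaluating at $u=-e^{s_k}$ yields $c_k=0$ for each $k$. Either route gives $c=0$, contradicting $c\neq 0$, so $D\not\equiv 0$ and suitable biases exist, completing the construction.
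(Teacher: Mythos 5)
Your stage one (making the rows of $F_{k-1}$ pairwise distinct by a generic direction argument) is exactly the paper's, and your stage-two parametrization is also essentially the paper's: in its construction all columns of $W_k$ are parallel and only a common scaling and the biases vary, so reducing the problem to the nonsingularity of $\tilde M_{ij}=\sigma(s_i+\beta_j)$ (with a ones column appended) is sound. Your determinant-derivative reduction is also correct: if $D\equiv 0$ then all mixed partials at $\beta=0$ vanish, these are the determinants $\det[\sigma^{(k_1)}(s_i),\ldots,\sigma^{(k_{N-1})}(s_i),\ones_N]$, and their simultaneous vanishing does yield a nonzero $c\in\RR^N$ with $\sum_i c_i=0$ and $h(t)=\sum_i c_i\sigma(t+s_i)\equiv 0$. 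The gap is the final step, which is exactly where the whole difficulty of the lemma sits and which you yourself label a heuristic. The claim that a solution of $\sum_i c_i\sigma(t+s_i)\equiv 0$ must be ``a combination of exponentials $e^{\lambda t}$ with $\sum_i c_ie^{\lambda s_i}=0$'' is the content of spectral synthesis for mean-periodic functions; even granting that theorem, the solution space is only the \emph{closed} span of exponential monomials $t^me^{\lambda t}$ with $\lambda$ ranging over a generally infinite complex zero set, so the term-by-term growth and monotonicity contradictions you sketch do not follow. Under condition 2(a) the step can be rescued with distribution theory (e.g. $\sigma'\in L^1$, so $\widehat{\sigma'}$ is continuous and supported on the discrete real zero set of $\xi\mapsto\sum_ic_ie^{is_i\xi}$, hence identically zero, contradicting $\int\sigma'>0$), but under condition 2(b) $\sigma'$ need not be integrable and no comparably short argument is present in what you wrote. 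The explicit verification you offer covers only the three concrete activations, not the class of Assumption \ref{ass}, so the lemma as stated is not proved.

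The paper sidesteps this issue by not fixing the offsets $s_i$: it takes $w_j=-\alpha a$ and $v_j=\alpha a^Tz_j+\beta$ and sends $\alpha\to\infty$, so that the entries $\sigma(\alpha(z_j-z_i)^Ta+\beta)$ either converge to a triangular pattern with nonzero diagonal (bounded case) or, via the Leibniz formula and the growth bounds of condition 2(b), kill every non-identity permutation term; in both cases $\det E(\alpha)\neq 0$ for some finite $\alpha$. Your construction can be repaired the same way: replace $u$ by $\alpha u$, set $\beta_j=-\alpha u^Ty_j+\beta$, and run that limit argument instead of the functional-equation step. Otherwise you must supply a complete proof that $h\equiv 0$ forces $c=0$ for every $\sigma$ satisfying Assumption \ref{ass}.
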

\ifpaper
\begin{proof}
     We first show by induction that there always exists a set of parameters 
     $(W_l,b_l)_{l=1}^{k-1}$ s.t. $F_{k-1}$ has distinct rows.
    Indeed, we have $F_1=\sigma(XW_1+\ones_N  b_1^T)$.
    The set of $(W_1,b_1)$ that makes $F_1$ to have distinct rows is characterized by 
    \[ \sigma(W_1^T x_i+b_1)\neq\sigma(W_1^T x_j+b_1), \quad \forall i\neq j.\]
    Note, that $\sigma$ is strictly monotonic and thus bijective on its domain. Thus this is equivalent to
    \[ W_1^T (x_i-x_j) \neq 0, \quad \forall i \neq j.\]
    Let us denote the first column of $W_1$ by $a$, then the existence of $a$ for which
    \begin{align}\label{eq:hyperplanes}
     a^T(x_i-x_j) \neq 0, \quad \forall i \neq j,
    \end{align}
    would imply the result. Note that by assumption $x_i\neq x_j$ for all $i\neq j$.
    Then the set $\{ a \in \RR^d \,|\, a^T(x_i-x_j)=0\}$ is a hyperplane, which has measure zero
    and thus the set where condition \eqref{eq:hyperplanes} fails corresponds to the union of $\frac{N(N-1)}{2}$
    hyperplanes which again has measure zero. Thus there always exists a vector $a$ such that condition \eqref{eq:hyperplanes}
    is satisfied and thus there exists $(W_1,b_1)$ such that the rows of $F_1$ are distinct.
    Now, assume that $F_{p-1}$ has distinct rows for some $p\geq 1$, 
    then by the same argument as above we need to construct $W_p$ such that
    \[ W_p^T \Big(f_{p-1}(x_i)-f_{p-1}(x_j)\Big)\neq 0, \quad \forall i \neq j.\]
    By construction $f_{p-1}(x_i)\neq f_{p-1}(x_j)$ and thus with the same argument as above we can choose $W_p$ such that this condition holds. 
    As a result, there exists a set of parameters $(W_l,b_l)_{l=1}^{k-1}$ so that $F_{k-1}$ has distinct rows.
    
    Now, given that $F_{k-1}$ has distinct rows,
    we show how to construct $(W_k,b_k)$ in such a way that $[F_k,\ones_N]\in\RR^{N\times (n_k+1)}$ has full row rank.
    Since $n_k\geq N-1$, it is sufficient to make the first $N-1$ columns of $F_k$ together with the all-ones vector
    become linearly independent.
    In particular, let $F_k=[A,B]$ where $A\in\RR^{N\times (N-1)}$ and $B\in\RR^{N\times(n_k-N+1)}$
    be the matrices containing outputs of the first $(N-1)$ hidden units and last $(n_k-N+1)$ hidden units of layer $k$ respectively.
    Let $W_k=[w_1,\ldots,w_{N-1},w_N,\ldots,w_{n_k}]\in\RR^{n_{k-1}\times n_k}$
    and $b_k=[v_1,\ldots,v_{N-1},v_N,\ldots,v_{n_k}]\in\RR^{n_k}.$
    Let $Z=F_{k-1}=[z_1,\ldots,z_N]^T\in\RR^{N\times n_{k-1}}$ with $z_i\neq z_j$ for every $i\neq j.$
    By definition of $F_k$, it holds $A_{ij} = \sigma(z_i^T w_j+v_j)$ for $i\in[N],j\in[N-1].$
    As mentioned above, we just need to show there exists $(w_j,v_j)_{j=1}^{N-1}$ 
    so that $\rank([\ones_N,A])=N$ because then it will follow immediately that $\rank([F_k,\ones_N])=N.$
    Pick any $a\in\RR^{n_{k-1}}$ satisfying potentially after reordering w.l.o.g. 
    $\innerProd{a}{z_1}<\innerProd{a}{z_2}<\ldots<\innerProd{a}{z_N}.$
    By the discussion above such a vector always exists since the complementary set is contained in
    $\bigcup_{i\neq j} \Setbar{a\in\RR^{n_{k-1}}}{\innerProd{z_i-z_j}{a}=0}$ which has measure zero.
    
    We first prove the result for the case where $\sigma$ is bounded. Since $\sigma$ is bounded and strictly monotonically increasing, 
    there exist two finite values $\gamma,\mu\in\RR$ with $\mu < \gamma$ s.t. 
    $$
	\lim\limits_{\alpha\to-\infty} \sigma(\alpha)=\mu \quad \textrm{ and } \quad \\
	\lim\limits_{\alpha\to+\infty} \sigma(\alpha)=\gamma.
    $$
    Moreover, since $\sigma$ is strictly monotonically increasing 
	it holds for every $\beta\in\RR$, $\sigma(\beta)>\mu.$
	Pick some $\beta\in\RR$. 
	For $\alpha\in\RR$, we define $w_j=-\alpha a, v_j=\alpha z_j^T a + \beta$ for every $j\in[N-1].$
	Note that the matrix $A$ changes as we vary $\alpha$.
	Thus, we consider a family of matrices $A(\alpha)$ defined as
	$A(\alpha)_{ij}=\sigma(z_i^T w_j+v_j)=\sigma(\alpha (z_j-z_i)^T a + \beta).$ 
	Then it holds for every $i\in[N],j\in[N-1]$
	\begin{align*}
	    \lim\limits_{\alpha\to+\infty} A(\alpha)_{ij} = 
	    \begin{cases}\gamma&j>i\\ \sigma(\beta)&j=i\\ \mu&j<i\end{cases}
	\end{align*}
	Let $E(\alpha)=[\ones_N, A(\alpha)]$ then it holds
	\begin{align*}
	    \lim\limits_{\alpha\to+\infty} E(\alpha)_{ij} = 
	    \begin{cases}1&j=1\\ \mu&j\in[2,N],i\geq j\\ \sigma(\beta)&j=i+1\\ \gamma&\textrm{else}\end{cases}
	\end{align*}
	Let $\hat{E}(\alpha)$ be a modified matrix where one subtracts every row $i$ by row $(i-1)$ of $E(\alpha)$, 
	in particular, let 
	\begin{align}
	    \hat{E}(\alpha)_{ij} = \begin{cases}E(\alpha)_{ij}&i=1,j\in[N]\\ E(\alpha)_{ij}-E(\alpha)_{i-1,j}&i>1,j\in[N]\end{cases}
	\end{align}
	then it holds
	\begin{align*}
	    \lim\limits_{\alpha\to+\infty} \hat{E}(\alpha)_{ij} = 
	    \begin{cases}1&i=j=1\\ \mu-\sigma(\beta)< 0&i=j>1\\ 0&i>j \end{cases}
	\end{align*}
	We do not show the values of other entries as what matters is that the limit, 
	$\lim\limits_{\alpha\to+\infty}\hat{E}(\alpha)$, is an upper triangular matrix. 
	Thus, the determinant is equal to the product of its diagonal entries which is non-zero. 
	Note that the determinant of $\hat{E}(\alpha)$ is the same as that of $E(\alpha)$ 
	as subtraction of some row from some other row does not change the determinant, 
	and thus we get that $\lim\limits_{\alpha\to+\infty}E(\alpha)$ has full rank $N$.
  As the determinant of $E(\alpha)$ is a polynomial of its entries and thus continuous in $\alpha$, 
  there exists $\alpha_0\in\RR$ s.t. for every $\alpha\geq\alpha_0$ 
	it holds $\rank(E(\alpha))=\rank([\ones_N, A(\alpha)])=N.$
	Moreover, since $A$ is chosen as the first $(N-1)$ columns of $F_k$, 
	one can always choose the weights of the first $(N-1)$ hidden units of layer $k$ so that $\rank([F_k,\ones_N])=N$.

       In the case where the  activation function fulfills $|\sigma(t)|\leq \rho_1 e^{\rho_2 t}$ for $t< 0$ and 
       $|\sigma(t)|\leq \rho_3 t + \rho_4 $ for $t\geq 0$ we consider directly the determinant of the matrix $E(\alpha)$.
       In particular, let us pick some $\beta\in\RR$ such that $\sigma(\beta)\neq 0$.
       We consider the family of matrices $A(\alpha)$ defined as
       $A(\alpha)_{ij}=\sigma(z_i^T w_j+v_j)=\sigma(\alpha(z_j-z_i)^T a+\beta)$ 
       where $w_j=-\alpha a, v_j=\alpha z_j^T a + \beta$ for every $j\in[N-1].$ 
       Let $E(\alpha)=[A(\alpha), \ones_N].$ 
       Note that the all-ones vector is now situated at the last column of $E(\alpha)$ instead of first column as before.
       This column re-ordering does not change the rank of $E(\alpha).$
       By the Leibniz-formula one has
       \[ \det(E(\alpha)) = \sum_{\pi \in S_N} \mathrm{sign}(\pi) \prod_{j=1}^{N-1} E(\alpha)_{\pi(j) j},\]
       where $S_N$ is the set of all $N!$ permutations of the set $\{1,\ldots,N\}$ 
       and we used the fact that the last column of $E(\alpha)$ is equal to the all ones vector.
       Define the permutation $\gamma$ as $\gamma(j)=j$ for $j\in[N].$ 
       Then we have
       \begin{align*}
	  &\det(E(\alpha))\\ 
	  &= \mathrm{\sign}(\gamma) \sigma(\beta)^{N-1} + 
	  \sum_{\pi \in S_N \backslash \{\gamma\} } \mathrm{sign}(\pi) \prod_{j=1}^{N-1} E(\alpha)_{\pi(j) j}.
	\end{align*}
       The idea now is to show that $\prod_{j=1}^{N-1} E(\alpha)_{\pi(j) j}$ goes to zero 
       for every permutation $\pi\neq\gamma$ as $\alpha$ goes to infinity. 
       And since the whole summation goes to zero while $\sigma(\beta)\neq 0$, the determinant would be non-zero as desired.
       With that, we first note that for any permutation $\pi\neq \gamma$ there has to be at least one component $\pi(j)$ 
       where $\pi(j)>j$, in which case, $\delta_j=(z_j-z_{\pi(j)})^T a<0$ and 
       thus for sufficiently large $\alpha$, it holds $\alpha \delta_j + \beta<0$. 
       Thus \[ |E(\alpha)_{\pi(j) j}|=|\sigma(\alpha (z_j-z_{\pi(j)})^T a + \beta)| 
       \leq \rho_1 e^{\rho_2\beta} e^{-\alpha \rho_2 |\delta_j|}.\]
       If $\pi(j)=j$ then $E(\alpha)_{\pi(j) j}=\sigma(\beta).$
       In cases where $\pi(j)<j (j\neq N)$ it holds that $\delta_j=(z_j-z_{\pi(j)})^T a>0$ and thus for sufficiently large $\alpha$, 
       it holds $\alpha \delta_j + \beta>0$ and we have
      \[ |E(\alpha)_{\pi(j) j}|=|\sigma(\alpha (z_j-z_{\pi(j)})^T a + \beta)|\leq \rho_3 \delta_j \alpha + \rho_3 \beta +\rho_4.\]
      So far, we have shown that $|E(\alpha)_{\pi(j) j}|$ can always be upper-bounded 
      by an exponential function resp. affine function of $\alpha$ when $\pi(j)>j$ resp. $\pi(j)<j$ or 
      it is just a constant when $\pi(j)=j.$
      The above observations imply that there exist positive constants $P,Q,R,S,T$ 
      such that it holds for every $\pi\in S_N\setminus\Set{\gamma},$
      \[  \Big|\prod_{j=1}^{N-1} E(\alpha)_{\pi(j) j}\Big| \; \leq \; R (P \alpha + Q)^{S} e^{-\alpha T}.\]
      As $\alpha \rightarrow \infty$ the upper bound goes to zero.
      As there are only finitely many such terms, we get
      \[ \lim_{\alpha \rightarrow \infty}  \det(E(\alpha)) = \mathrm{\sign}(\gamma) \sigma(\beta)^{N-1} \neq 0,\]
      and thus with the same argument as before we can argue that there exists a finite $\alpha_0$ 
      for which $E(\alpha)$ has full rank.

\end{proof}
\fi
Now we combine the previous lemma with Lemma \ref{lem:zeros_of_analytic} to conclude the following.

\begin{lemma}\label{lem:zero_measure}
    If the Assumptions \ref{ass} hold and $n_k\geq N-1$ for some $k\in[L-1]$ then 
    the set $S\bydef\Setbar{\big(W_l,b_l\big)_{l=1}^k}{\rank([F_k,\ones_N])<N}$
    has Lebesgue measure zero.
\end{lemma}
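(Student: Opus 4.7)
The plan is to combine Lemmas \ref{lem:analytic}, \ref{lem:zeros_of_analytic}, and \ref{lem:exist} in a direct way. The condition $\rank([F_k,\ones_N])<N$ is equivalent to the simultaneous vanishing of all $N\times N$ minors of the $N\times(n_k+1)$ matrix $[F_k,\ones_N]$. In particular, $S$ is contained in the zero set of any single such minor. So if I can exhibit one minor that is a real analytic function of $(W_l,b_l)_{l=1}^k$ and is not identically zero, Lemma \ref{lem:zeros_of_analytic} will immediately finish the argument.

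First I would observe that the entries of $F_k$ are real analytic functions of $(W_l,b_l)_{l=1}^k$ by Lemma \ref{lem:analytic}, and the entries of the appended all-ones column are trivially analytic (in fact constant). Any $N\times N$ minor of $[F_k,\ones_N]$ is, via the Leibniz formula, a polynomial in the entries of that matrix. Since polynomials of real analytic functions are real analytic (closure under finite sums and products, see e.g.\ Prop.~2.2.2 and 2.2.8 in \cite{KraPar2002}), every such minor is a real analytic function $\mu:\mathcal{P}_{\leq k}\to\RR$, where $\mathcal{P}_{\leq k}$ denotes the (open, Euclidean) parameter space of the first $k$ layers.

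Next I would invoke Lemma \ref{lem:exist}, which guarantees the existence of a parameter choice $(W_l^\circ,b_l^\circ)_{l=1}^k$ at which $\rank([F_k,\ones_N])=N$. By the characterization of rank via minors, there must exist at least one $N\times N$ submatrix whose determinant does not vanish at $(W_l^\circ,b_l^\circ)_{l=1}^k$. Fix such a minor and call it $\mu^\star$; then $\mu^\star$ is a real analytic function on $\mathcal{P}_{\leq k}$ that is not identically zero.

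Finally, by Lemma \ref{lem:zeros_of_analytic}, the zero set $\{(W_l,b_l)_{l=1}^k : \mu^\star = 0\}$ has Lebesgue measure zero. Since $S\subseteq\{\mu^\star=0\}$ (all minors vanish on $S$, in particular $\mu^\star$), we conclude that $S$ has Lebesgue measure zero, completing the proof. There is no real obstacle here: the substantive work was already done in Lemma \ref{lem:exist}, whose nontrivial construction of a rank-$N$ parameter point is what prevents the minor $\mu^\star$ from being identically zero, and everything else is a clean application of the analytic-zero-set principle.
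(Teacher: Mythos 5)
Your proposal is correct and follows essentially the same route as the paper's own proof: both express the low-rank condition via the vanishing of all $N\times N$ minors of $[F_k,\ones_N]$, use Lemma \ref{lem:analytic} to see each minor is real analytic, use Lemma \ref{lem:exist} to find one minor that is not identically zero, and then apply Lemma \ref{lem:zeros_of_analytic}. No gaps.
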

\ifpaper
\begin{proof}
    Let $E_k=[F_k,\ones_N]\in\RR^{N\times (n_k+1)}$. 
    Note that with Lemma \ref{lem:analytic} the output $F_k$ of layer $k$ is an analytic function of 
    the network parameters on $\mathcal{\P}$. The set of low rank matrices $E_k$ can be characterized by a system of equations 
    such that the $\binom{n_k+1}{N}$ determinants of all $N \times N$ submatrices of $E_k$ are zero. 
    As the determinant is a polynomial in the entries of the matrix
    and thus an analytic function of the entries and composition of analytic functions are again analytic, we conclude that each determinant is an analytic
    function of the network parameters of the first $k$ layers. 
    By Lemma \ref{lem:exist} there exists at least one set of network parameters  of the first $k$ layers 
    such that one of these determinant functions
    is not identically zero and thus by Lemma \ref{lem:zeros_of_analytic} the set of network parameters where this determinant is zero has measure zero.
    But as all submatrices need to have low rank in order that $\rank([F_k,\ones_N])<N$, 
    it follows that the set of network parameters where $\rank([F_k,\ones_N])<N$ has measure zero.   
\end{proof}
\fi
We conclude that for $n_k\geq N-1$ even if there are network parameters such that $\rank([F_k,\ones_N])<N$, then \emph{every} neighborhood of these
parameters contains network parameters such that $\rank([F_k,\ones_N])=N.$
\begin{corollary}\label{cor:perturb}
   If the Assumptions \ref{ass} hold and $n_k\geq N-1$ for some $k\in[L-1]$, 
   then for any given $(W^0_l,b^0_l)_{l=1}^k$ and 
    for every $\epsilon>0$, 
    there exists at least one
    $\big(W_l,b_l\big)_{l=1}^k\in B\Big(\big(W^0_l,b^0_l\big)_{l=1}^k,\epsilon\Big)$ 
    s.t. $\rank([F_k,\ones_N])=N.$
\end{corollary}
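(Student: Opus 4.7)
The plan is to reduce the corollary immediately to Lemma \ref{lem:zero_measure}, which already does all the heavy lifting. That lemma tells us the "bad" set
\[
S = \Setbar{(W_l,b_l)_{l=1}^k}{\rank([F_k,\ones_N]) < N}
\]
is a Lebesgue-null subset of the parameter space for the first $k$ layers. The corollary is then just the topological statement that a null set cannot contain any open ball.

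More concretely, I would argue as follows. Fix any base point $(W^0_l,b^0_l)_{l=1}^k$ and any $\epsilon > 0$. The open ball $B\big((W^0_l,b^0_l)_{l=1}^k, \epsilon\big)$ (in the product Euclidean norm on the parameter space of the first $k$ layers, which is a finite-dimensional Euclidean space) is nonempty and open, and so has strictly positive Lebesgue measure. If we had $B \subseteq S$, then $S$ would contain a set of positive measure, contradicting Lemma \ref{lem:zero_measure}. Hence $B \setminus S$ is nonempty, and any point in it gives a parameter configuration within $\epsilon$ of $(W^0_l,b^0_l)_{l=1}^k$ for which $\rank([F_k,\ones_N]) = N$.

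There is essentially no hard step here: the real work was already done in Lemma \ref{lem:exist} (producing at least one good set of parameters) and in Lemma \ref{lem:zero_measure} (using analyticity plus Lemma \ref{lem:zeros_of_analytic} to upgrade that single example to a full-measure statement). The only thing one should be careful about is making sure the parameter space is being viewed as a finite-dimensional Euclidean space so that "Lebesgue measure" and "open ball has positive measure" are meaningful; this is immediate from the definition $\mathcal{W}\times\mathcal{B}$ given earlier. No further technical machinery, and in particular no further use of analyticity or of the structure of $\sigma$, is needed at this stage.
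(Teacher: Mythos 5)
Your proposal is correct and is essentially identical to the paper's own proof: both invoke Lemma \ref{lem:zero_measure} to get that the bad set $S$ is Lebesgue-null, observe that the open ball has positive measure, and conclude that $B \setminus S$ is nonempty. No gaps.
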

\begin{proof}
    Let $S\bydef\Setbar{\big(W_l,b_l\big)_{l=1}^k}{\rank([F_k,\ones_N])<N}.$
    The ball $B\Big(\big(W_l,b_l\big)_{l=1}^k,\epsilon\Big)$ has positive Lebesgue measure 
    while $S$ has measure zero due to Lemma \ref{lem:zero_measure}. 
    Thus, for every $\big(W_l,b_l\big)_{l=1}^k \in B\Big(\big(W^0_l,b^0_l\big)_{l=1}^k,\epsilon\Big) \setminus S$ 
    it holds $\rank([F_k,\ones_N])=N.$
\end{proof}
The final proof of our main Theorem \ref{theo:main} is heavily based on the implicit function theorem, see \eg \cite{Marsden74}.
\begin{theorem}\label{theo:IFT}
	Let $\Psi:\RR^s\times\RR^t\to\RR^t$ be a continuously differentiable function.
	Suppose $(u_0,v_0)\in\RR^s\times\RR^t$ and $\Psi(u_0,v_0)=0.$
	If the Jacobian matrix w.r.t. $v$, 
	$$J_v\Psi(u_0,v_0) = 
	    \begin{bmatrix}
		\frac{\partial\Psi_1}{\partial v_1}&\cdots&\frac{\partial\Psi_1}{\partial v_t}\\              
		\vdots&&\vdots\\
		\frac{\partial\Psi_t}{\partial v_1}&\cdots&\frac{\partial\Psi_t}{\partial v_t}
	    \end{bmatrix}
	    \in\RR^{t\times t}
	$$
	is non-singular at $(u_0,v_0)$, then there is an open ball $B(u_0,\epsilon)$ for some $\epsilon>0$ 
	and a unique function $\alpha:B(u_0,\epsilon)\to\RR^t$ such that
	$\Psi(u,\alpha(u))=0$ for all $u\in B(u_0,\epsilon)$.
	Furthermore, $\alpha$ is continuously differentiable.
\end{theorem}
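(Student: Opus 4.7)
The plan is to prove the implicit function theorem by applying the Banach contraction mapping principle to a Newton-style operator. Setting $A \bydef J_v\Psi(u_0,v_0)$, which is invertible by hypothesis, I would define $T(u,v) \bydef v - A^{-1}\Psi(u,v)$ and observe that fixed points of $T(u,\cdot)$ are exactly the solutions of $\Psi(u,v)=0$. The task therefore reduces to showing that, for every $u$ in a small ball around $u_0$, the map $T(u,\cdot)$ has a unique fixed point in some ball around $v_0$.

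First I would compute $J_v T(u_0,v_0) = I - A^{-1} A = 0$, and use continuity of $J_v T$ to find radii $r>0,\epsilon_0>0$ with $\norm{J_v T(u,v)} \le \tfrac{1}{2}$ on $B(u_0,\epsilon_0) \times \overline{B}(v_0,r)$. The mean value inequality then yields the contraction estimate $\norm{T(u,v_1)-T(u,v_2)} \le \tfrac{1}{2}\norm{v_1-v_2}$ uniformly in $u$. Next I would shrink $\epsilon_0$ to some $\epsilon$ so that $\norm{T(u,v_0)-v_0} \le r/2$ for $u\in B(u_0,\epsilon)$, which together with contractivity forces $T(u,\cdot)$ to map $\overline{B}(v_0,r)$ into itself. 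The Banach fixed-point theorem then produces a unique $\alpha(u)\in \overline{B}(v_0,r)$ with $\Psi(u,\alpha(u))=0$.

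For the smoothness claim, continuity of $\alpha$ follows from the estimate $\norm{\alpha(u)-\alpha(u')} \le \norm{T(u,\alpha(u'))-T(u',\alpha(u'))} + \tfrac{1}{2}\norm{\alpha(u)-\alpha(u')}$, giving Lipschitz dependence on $u$. To establish $C^1$ regularity, I would differentiate the identity $\Psi(u,\alpha(u)) \equiv 0$ formally via the chain rule to arrive at the candidate
\[ J\alpha(u) = -[J_v\Psi(u,\alpha(u))]^{-1} J_u\Psi(u,\alpha(u)), \]
valid once $\epsilon$ is shrunk so that $J_v\Psi(u,\alpha(u))$ stays invertible (by continuity of the determinant and the fact that invertibility is an open condition). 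Rigorous justification comes from bounding the linearization error of $\Psi$ at $(u,\alpha(u))$ using continuous differentiability of $\Psi$ and the Lipschitz estimate on $\alpha$; continuity of $J\alpha$ is then immediate from the closed-form expression.

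The main obstacle is bookkeeping: one must coordinate the neighborhood choices so that invariance of the ball, contractivity of $T(u,\cdot)$, and invertibility of $J_v\Psi(u,\alpha(u))$ all hold simultaneously on the same domain, and then pass carefully from a contraction bound in $v$ alone to a joint estimate on $(u,v)$ when proving differentiability of $\alpha$. Everything else is standard multivariable calculus together with the Banach fixed-point theorem.
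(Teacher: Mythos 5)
Your proposal is the standard contraction-mapping proof of the implicit function theorem, and it is correct; the paper itself offers no proof of this statement, citing it as a classical result from Marsden's text, where essentially this same Newton-operator/Banach-fixed-point argument is used. The only point worth tightening is that the uniqueness you obtain is for the fixed point within $\overline{B}(v_0,r)$ (equivalently, uniqueness of $\alpha$ among functions taking values near $v_0$ with $\alpha(u_0)=v_0$), which is the correct reading of the theorem as stated.
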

    
With all the intermediate results proven above, we are finally ready for the proof of the main result.
\paragraph{Proof of Theorem \ref{theo:main} for case $\mathcal{I}=\Set{k+1,\ldots,L}$}$\;$\\
	Let us divide the set of all parameters of the network into two subsets where
	one corresponds to all parameters of all layers up to $k$, 
	for that we denote $u=[\vec(W_1)^T,b_1^T,\ldots,\vec(W_k)^T,b_k^T]^T$, 
	and the other corresponds to the remaining parameters,
	for that we denote $v=[\vec(W_{k+1})^T,b_{k+1}^T,\ldots,\vec(W_L)^T,b_L^T]^T.$
	By abuse of notation, we write $\Phi(u,v)$ to denote $\Phi\Big( (W_l,b_l)_{l=1}^L \Big).$
	Let $s=\dim(u), t=\dim(v)$ and $(u^*,v^*)\in\RR^s\times\RR^t$ 
	be the corresponding vectors for the critical point $(W^*_l,b^*_l)_{l=1}^L.$
	Let $\Psi:\RR^s\times\RR^t\to\RR^t$ be a map defined as
	$\Psi(u,v)=\nabla_v \Phi(u,v) \in \RR^t$, 
	which is the gradient mapping of $\Phi$ w.r.t. all parameters of the upper layers from $(k+1)$ to $L.$
	Since the gradient vanishes at a critical point, it holds that $\Psi(u^*,v^*)=\nabla_v \Phi(u^*,v^*) =0.$
	The Jacobian of $\Psi$ w.r.t. $v$ is the principal submatrix of the Hessian of $\Phi$ w.r.t. $v$,
	that is, $J_v\Psi(u,v)=\nabla^2_v\Phi(u,v) \in \RR^{t\times t}.$
	As the critical point is assumed to be non-degenerate with respect to $v$, it holds that $J_v\Psi(u^*,v^*)=\nabla^2_v\Phi(u^*,v^*)$ is non-singular.
	Moreover, $\Psi$ is continuously differentiable since $\Phi\in C^2(\mathcal{\P})$ due to Assumption \ref{ass}.
	Therefore, $\Psi$ and $(u^*,v^*)$ satisfy the conditions of the implicit function theorem \ref{theo:IFT}. 
	Thus there exists an open ball $B(u^*, \delta_1)\subset\RR^s$ 
	for some $\delta_1>0$ and a continuously differentiable function
	$\alpha:B(u^*,\delta_1)\to\RR^t$ such that
	$$\begin{cases}
	    \Psi(u,\alpha(u)) = 0, & \forall\, u\in B(u^*, \delta_1)\\
	    \alpha(u^*) = v^*
	\end{cases}$$
	By assumption we have $\rank(W^*_l)=n_l, l\in[k+2,L]$, 
	that is the weight matrices of the ``upper'' layers have full column rank.
	Note that $(W^*_l)_{l=k+2}^L$ corresponds to the weight matrix part of $v^*$ where one leaves out $W^*_{k+1}$.
	Thus there exists a sufficiently small $\epsilon$ such that for any $v \in B(v^*,\epsilon)$, the weight matrix part $(W_l)_{l=k+2}^L$ of $v$
	has full column rank. 
	In particular, this, combined with the continuity of $\alpha$,
	implies that for a potentially smaller $0<\delta_2 \leq \delta_1$, it holds for all $u \in B(u^*,\delta_2)$ that
	\[ \Psi(u,\alpha(u))=0, \; \alpha(u^*)=v^*,\]
	and that the weight matrix part $(W_l)_{l=k+2}^L$ of $\alpha(u) \in \RR^t$ has full column rank.
	
	Now, by Corollary \ref{cor:perturb} for any $0<\delta_3 \leq \delta_2$ there exists a $\tilde{u} \in B(u^*,\delta_3)$
	such that the generated output matrix $\tilde{F}_k$ at layer $k$ 
	of the corresponding network parameters of $\tilde{u}$ satisfies $\rank([\tilde{F_k}, \ones_N])=N.$
	Moreover, it holds for $\tilde{v}=\alpha(\tilde{u})$ that $\Psi(\tilde{u},\tilde{v})=0$ 
	and the weight matrix part $(\tilde{W}_l)_{l=k+2}^L$ of $\tilde{v}$ has full column rank.
	Assume $(\tilde{u}, \tilde{v})$ corresponds to the following representation
	$$\begin{cases}
	    \tilde{u}= [\vec(\tilde{W}_1)^T,\tilde{b}_1^T,\ldots,\vec(\tilde{W}_k)^T,\tilde{b}_k^T]^T \in\RR^s\\
	    \tilde{v}= [\vec(\tilde{W}_{k+1})^T,\tilde{b}_{k+1}^T,\ldots,\vec(\tilde{W}_L)^T,\tilde{b}_L^T]^T  \in\RR^t\\
	\end{cases}$$
	We obtain the following
	$$\begin{cases}
	    \Psi(\tilde{u},\tilde{v})=0 \Rightarrow \nabla_{W_{k+1}}\Phi\Big( (\tilde{W_l},\tilde{b_l})_{l=1}^k\Big)=0 \\
	    \Psi(\tilde{u},\tilde{v})=0 \Rightarrow \nabla_{b_{k+1}}\Phi\Big( (\tilde{W_l},\tilde{b_l})_{l=1}^k\Big)=0 \\
	    \rank(\tilde{W_l})=n_l, \forall\,l\in[k+2,L] \\
	    \rank([\tilde{F_k}, \ones_N]) = N
	\end{cases}$$
	Thus, Lemma \ref{lem:zero_loss} implies that $(\tilde{W_l},\tilde{b_l})_{l=1}^L$ is a global minimum of $\Phi.$
	Let $p^* = \Phi\Big( (\tilde{W}_l,\tilde{b}_l)_{l=1}^L \Big) = \Phi(\tilde{u},\tilde{v}).$
	Note that this construction can be done for any $\delta_3\in(0,\delta_2].$
	In particular, let $(\gamma_r)_{r=1}^\infty$ be a strictly monotonically decreasing sequence
	such that $\gamma_1=\delta_3$ and $\lim_{r\rightarrow\infty}\gamma_r=0.$
	By Corollary \ref{cor:perturb} and the previous argument, 
	we can choose for any $\gamma_r>0$ a point $\tilde{u_r}\in B(u^*,\gamma_r)$ such that 
	$\tilde{v}_r=\alpha(\tilde{u}_r)$ has full rank and $\Phi(\tilde{u}_r,\tilde{v}_r)=p^*.$
	Moreover, as $\lim_{r\rightarrow\infty}\gamma_r=0$, it follows that $\lim_{r\rightarrow \infty} \tilde{u}_r=u^*$ 
	and as $\alpha$ is a continuous function, it holds with $\tilde{v}_r=\alpha(\tilde{u}_r)$ 
	that $\lim_{r\rightarrow \infty} \tilde{v}_r=\lim_{r\rightarrow \infty} \alpha(\tilde{u}_r)=\alpha(\lim_{r\rightarrow \infty} \tilde{u}_r)=\alpha(u^*)=v^*$.
	Thus we get $\lim_{r \rightarrow \infty} (\tilde{u}_r,\tilde{v}_r) = (u^*,v^*)$ 
	and as $\Phi$ is a continuous function it holds
	\[ \lim_{r\rightarrow \infty} \Phi\Big( (\tilde{u}_r,\tilde{v}_r)\Big)=\Phi(u^*,v^*) = p^*,\]
	as $\Phi$ attains the global minimum for the whole sequence $(\tilde{u}_r,\tilde{v}_r).$	
\paragraph{Proof of Theorem \ref{theo:main} for general case}$\;$\\
    In the general case $\mathcal{I}\subseteq\Set{k+1,\ldots,L}$, 
    the previous proof can be easily adapted.
    The idea is that we fix all layers in $\Set{k+1,\ldots,L}\setminus\mathcal{I}.$
    In particular, let
    \begin{align*}
	\begin{cases}
	    u=[\vec(W_1)^T,b_1^T,\ldots,\vec(W_k)^T,b_k^T]^T \\
	    v=[\vec(W_{\mathcal{I}(1)})^T,b_{\mathcal{I}(1)}^T,\ldots,\vec(W_{\mathcal{I}(|\mathcal{I}|)})^T,b_{\mathcal{I}(|\mathcal{I}|)}^T]^T.
	\end{cases}
    \end{align*}
    Let $s=\dim(u), t=\dim(v)$ and $(u^*,v^*)\in\RR^s\times\RR^t$ 
    be the corresponding vectors at $(W^*_l,b^*_l)_{l=1}^L.$
    Let $\Psi:\RR^s\times\RR^t\to\RR^t$ be a map defined as 
    $\Psi(u,v)=\nabla_v \Phi\Big((W_l,b_l)_{l=1}^L\Big)$
    with $\Psi(u^*,v^*)=\nabla_v \Phi\Big( (W^*_l,b^*_l)_{l=1}^L\Big)=0.$
    
    The only difference is that all the layers from 
    $\Set{k+1,\ldots,L}\setminus\mathcal{I}$ are hold fixed.
    They are not contained in the arguments of $\Psi$, thus will not be involved in our perturbation analysis.
    In this way, the full rank property of the weight matrices of these layers are preserved, 
    which is needed to obtain the global minimum.

\section{Relaxing the Condition on the Number of Hidden Units}
We have seen that $n_k\geq N-1$ is a sufficient condition which leads to a rather simple structure of the critical points, in the sense
that all local minima which have full rank in the layers $k+2$ to $L$ 
and for which the Hessian is non-degenerate on any subset of upper layers that includes layer $k+1$ 
are automatically globally optimal. This suggests that suboptimal locally optimal points are either completely
absent or relatively rare. We have motivated before that networks with a certain wide layer are used
in practice, which shows that the condition $n_k\geq N-1$ is not completely unrealistic. On the other hand we want to discuss in this section
how it could be potentially relaxed. The following result will provide 
some intuition about the case $n_k<N-1$, but will not be as strong
as our main result \ref{theo:main} which makes statements about a large class of critical points. 
The main idea is that with the condition $n_k\geq N-1$
the data is linearly separable at layer $k$. As modern neural networks are expressive enough to represent any function, see \cite{Zhang16} for an
interesting discussion on this, one can expect that in some layer the training data becomes linearly separable.
We prove that any critical point, for which the ``learned'' network outputs at any layer are linearly separable 
(see Definition \ref{def:linearly_separable}) is a global minimum of the training error.

\begin{definition}[Linearly separable vectors]\label{def:linearly_separable}
    A set of vectors $(x_i)_{i=1}^N\in\RR^d$ from $m$ classes $(C_j)_{j=1}^m$ 
    is called linearly separable if there exist $m$ vectors $(a_j)_{j=1}^m\in\RR^d$
    and $m$ scalars $(b_j)_{j=1}^m\in\RR$ so that 
    $a_j^T x_i+b_j >0$ for $x_i\in C_j$ and $a_j^T x_i+b_j <0$ for $x_i\notin C_j$
    for every $i\in[N],j\in[m].$
\end{definition}


In this section, we use a slightly different loss function than in the previous section. The reason
is that the standard least squares loss is not necessarily small when the data is linearly separable.
Let $C_1,\ldots,C_m$ denote $m$ classes.
We consider the objective function $\Phi:\P\to\RR$ from \eqref{eq:main1}
\begin{align}\label{eq:main2}
    \Phi\Big( (W_l,b_l)_{l=1}^L\Big) = \sum_{i=1}^N \sum_{j=1}^m   l\big(f_{Lj}(x_i) - y_{ij}\big)
\end{align}
where the loss function now takes the new form
\begin{align*}
   l\big(f_{Lj}(x_i) - y_{ij}\big) =
   \begin{cases}
	l_1\big(f_{Lj}(x_i) - y_{ij}\big) & x_i\in C_j \\
	l_2\big(f_{Lj}(x_i) - y_{ij}\big) & x_i\notin C_j 
   \end{cases} 
\end{align*}
where $l_1,l_2$ penalize the deviation from the label encoding for the true class resp. wrong classes.
We assume that the minimum of $\Phi$ is attained over $\P.$
Note that $\Phi$ is bounded from below by zero as $l_1$ and $l_2$ are non-negative loss functions.
The results of this section are made under the following assumptions on the activation and loss function.
\begin{assumptions}\label{ass2}
\begin{enumerate}
    \item $\sigma\in C^1(\RR)$ and strictly monotonically increasing.
    \item $l_1:\RR\to\RR_+, l_1\in C^1$, $l_1(a)=0\Leftrightarrow a\geq0$, $l_1'(a)=0\Leftrightarrow a\geq0$ and $l_1'(a)<0\,\,\forall\,a<0$ 
    \item $l_2:\RR\to\RR_+, l_2\in C^1$, $l_2(a)=0\Leftrightarrow a\leq0$, $l_2'(a)=0\Leftrightarrow a\leq0$ and $l_2'(a)>0\,\,\forall\,a>0$ 
\end{enumerate}
\end{assumptions}
In classification tasks, this loss function encourages higher values for the true class and lower values for wrong classes.
An example of the loss function that satisfies Assumption \ref{ass2} is given as (see Figure \ref{fig:error_function}): 
\begin{align*}
    l_1(a)=
    \begin{cases}
	a^2 & a\leq0\\
	0 & a\geq0
    \end{cases} &\quad&
    l_2(a)=
    \begin{cases}
	0 & a\leq0\\
	a^2 & a\geq0
    \end{cases}
\end{align*}
Note that for a $\{+1,-1\}$-label encoding, $+1$ for the true class and $-1$ for all wrong classes, 
one can rewrite \eqref{eq:main2} as
\[ \Phi\Big( (W_l,b_l)_{l=1}^L\Big) = \sum_{i=1}^N \sum_{j=1}^m \max\{0,1-y_{ij} f_{Lj}(x_i)\}^2,\]
which is similar to the truncated squared loss (also called squared hinge loss) used in the SVM for binary classification.
\begin{figure}
    \center
    \begin{tikzpicture}
	  \draw[->] (-2.2,0) -- (2.2,0) node[right] {};
	  \draw[->] (0,-.5) -- (0,2) node[above] {};
	  \draw[scale=1,domain=-1.5:0,smooth,variable=\x,blue] plot ({\x},{0});
	  \draw[scale=1,domain= 0:1.2,smooth,variable=\x,blue] plot ({\x},{\x*\x});
	  \draw[scale=1,domain=-1.2:0,smooth,variable=\x,red]  plot ({\x},{\x*\x});
	  \draw[scale=1,domain= 0:1.5,smooth,variable=\x,red]  plot ({\x},{0});
	  \node at (-1.2,.8) {\footnotesize $l_1$};
	  \node at (1.2,.8) {\footnotesize $l_2$};
    \end{tikzpicture}
    \caption{An example of $l_1,l_2.$}
    \label{fig:error_function}
\end{figure}
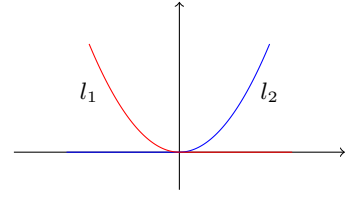
Since $\sigma$ and $l$ are continuously differentiable, 
all the results from Lemma \ref{lem:grad} still hold.

Our main result in this section is stated as follows.
\begin{theorem}\label{theo:main2}
    Let $\Phi:\P \to\RR_+$ be defined as in \eqref{eq:main2} and let the Assumptions \ref{ass2} hold.
    Then it follows:
    \begin{enumerate}
	\item Every critical point of $\Phi$ 
	for which the feature vectors contained in the rows of $F_k$ are linearly separable 
	and all the weight matrices $(W_l)_{l=k+2}^L$ have full column rank is a global minimum.
	
	\item If the training inputs are linearly separable then every critical point of $\Phi$ 
	for which all the weight matrices $(W_l)_{l=2}^L$ have full column rank is a global minimum.
    \end{enumerate}
\end{theorem}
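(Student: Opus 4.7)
My plan is to prove the theorem by showing that at any critical point satisfying the hypotheses, the backpropagated error $\Delta_L$ vanishes, which then forces $\Phi=0$, the global minimum. The sign structure of $\Delta_L$ will be decisive: under Assumption \ref{ass2}, $\sigma'>0$, $l_1'\leq 0$ with $l_1'(a)=0\Leftrightarrow a\geq 0$, and $l_2'\geq 0$ with $l_2'(a)=0\Leftrightarrow a\leq 0$, so each entry $(\Delta_L)_{ij}=l'(f_{Lj}(x_i)-y_{ij})\sigma'(g_{Lj}(x_i))$ is non-positive for $x_i\in C_j$ and non-negative for $x_i\notin C_j$; moreover $(\Delta_L)_{ij}=0$ precisely when the corresponding loss term vanishes. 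Hence proving $\Delta_L=0$ is equivalent to proving $\Phi=0$.

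For the first statement I would first combine the critical point conditions $\nabla_{W_{k+1}}\Phi=F_k^T\Delta_{k+1}=0$ and $\nabla_{b_{k+1}}\Phi=\Delta_{k+1}^T\ones_N=0$ from Lemma \ref{lem:grad} into $[F_k,\ones_N]^T\Delta_{k+1}=0$, so that every column of $\Delta_{k+1}$ is orthogonal to the column span of $[F_k,\ones_N]$. Linear separability of the rows of $F_k$ furnishes, for each class $j\in[m]$, vectors $a_j\in\RR^{n_k}$ and $b_j\in\RR$ such that $v_j:=F_k a_j+b_j\ones_N$ lies in this column span and satisfies $(v_j)_i>0$ on $C_j$ and $(v_j)_i<0$ off $C_j$; consequently $v_j^T(\Delta_{k+1})_{:,s}=0$ for every $s\in[n_{k+1}]$. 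Next, by iterating the backpropagation identity $(\Delta_l)_{i,:}=(\Delta_{l+1})_{i,:}W_{l+1}^T\mathrm{diag}(\sigma'((G_l)_{i,:}))$, I obtain $(\Delta_{k+1})_{i,:}=(\Delta_L)_{i,:}A_i$, where $A_i=W_L^T D^{(i)}_{L-1}W_{L-1}^T\cdots W_{k+2}^T D^{(i)}_{k+1}$ and $D^{(i)}_l=\mathrm{diag}(\sigma'((G_l)_{i,:}))$. The full column rank of every $W_l$ for $l\geq k+2$ together with the invertibility of each $D^{(i)}_l$ (since $\sigma'>0$) implies that $A_i\in\RR^{m\times n_{k+1}}$ has full row rank $m$, so $(\Delta_{k+1})_{i,:}=0$ iff $(\Delta_L)_{i,:}=0$.

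The main obstacle is then to combine the vector-valued orthogonality relation $\sum_i (v_j)_i (\Delta_L)_{i,:}A_i=0$ with the sign structure of $\Delta_L$ to conclude $\Delta_L=0$. In the base case $L=k+1$ this is immediate: the relation collapses to $\sum_i (v_j)_i(\Delta_L)_{ij}=0$, every summand being non-positive by the sign analysis (a positive factor times a non-positive factor on $C_j$, and a negative factor times a non-negative factor off $C_j$), so a sum of non-positive terms equaling zero forces each summand, hence each $(\Delta_L)_{ij}$, to vanish. For general depth I expect to need an induction on $L-k$ that additionally exploits the critical point relations $[F_{l-1},\ones_N]^T\Delta_l=0$ at the higher layers $l=k+2,\ldots,L$ together with the full column rank of the corresponding $W_l$, in order to effectively decouple the $i$-dependent factors inside $A_i$ from the sign-structured factors of $\Delta_L$; this coupling of the layer-by-layer gradient identities is the delicate part of the argument. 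Once $\Delta_L=0$ is obtained, $\sigma'>0$ forces $l'(F_L-Y)=0$ entrywise, and Assumption \ref{ass2} then yields $l_1=0$ on $C_j$ and $l_2=0$ off $C_j$, so $\Phi=0$.

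The second statement follows immediately from the first by setting $k=0$ and identifying $F_0$ with the input matrix $X$: the hypothesis that the training inputs are linearly separable is precisely the linear separability of the rows of $F_0$, and the rank hypothesis $\rank(W_l)=n_l$ for $l\in[2,L]$ coincides with $\rank(W_l)=n_l$ for $l\in[k+2,L]$ at $k=0$.
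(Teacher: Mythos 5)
Your setup coincides with the paper's: the sign pattern of $\Delta_L$ under Assumption \ref{ass2}, the separating vectors $v_j=F_ka_j+b_j\ones_N$ lying in the column span of $[F_k,\ones_N]$, the criticality condition $[F_k,\ones_N]^T\Delta_{k+1}=0$, and the row-wise factorization $(\Delta_{k+1})_{i,:}=(\Delta_L)_{i,:}A_i$ are all correct, as are your base case $L=k+1$, the final step from $\Delta_L=0$ to $\Phi=0$, and the reduction of statement 2 to statement 1 via $F_0=X$. However, the step you defer --- passing from $\sum_i (v_j)_i(\Delta_L)_{i,:}A_i=0$ to $\Delta_L=0$ when $L>k+1$ --- is precisely the content of the theorem for deep networks, and the route you sketch for closing it (invoking the additional criticality relations $F_{l-1}^T\Delta_l=0$ for $l\in[k+2,L]$) is not the right tool: those relations involve $F_{l-1}$, whose rows are not known to be linearly separable, so the sign argument has no purchase on them, and the paper's proof never uses them. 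As it stands the proposal therefore has a genuine gap at the decisive step.

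The paper closes this gap using only the layer-$(k+1)$ criticality together with the rank conditions, by organizing the sample-dependent diagonal factors column-wise rather than row-wise. Writing $\Sigma^l_{p}=\mathrm{diag}(\sigma'(g_{lp}(x_1)),\ldots,\sigma'(g_{lp}(x_N)))$, the backpropagation identity becomes $(\Delta_{l})_{:p}=\Sigma^{l}_p\big(\Delta_{l+1}W_{l+1}^T\big)_{:p}$, i.e.\ each column of $\Delta_l$ is a sample-wise \emph{positive} rescaling of the corresponding column of $\Delta_{l+1}W_{l+1}^T$. Since left multiplication by a positive diagonal matrix preserves the sign pattern in the sample index, $H^T\tilde{F_k}^T\big(\prod_{l}\Sigma^l_{i_l}\big)\Delta_L$ is nonzero for every choice of unit indices whenever $\Delta_L\neq 0$ (its $(j,j)$ entry is a sum of non-positive terms, at least one strictly negative); full column rank of $W_L$ keeps it nonzero after right multiplication by $W_L^T$; one then chooses the free index $i_{L-1}$ equal to a nonzero column $p$ so that $\Sigma^{L-1}_p(\Delta_LW_L^T)_{:p}=(\Delta_{L-1})_{:p}$, and induction peels off one layer at a time until reaching $H^T\tilde{F_k}^T\Delta_{k+1}\neq 0$, which contradicts $[F_k,\ones_N]^T\Delta_{k+1}=0$. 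You would need to supply this (or an equivalent) decoupling argument; without it the proof is incomplete for every network with $L\geq k+2$.
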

\ifpaper
\begin{proof}
    \begin{enumerate}
	\item 
	Let $\tilde{F_k}=[F_k,\ones_N].$
	Since $F_k$ contains linearly separable feature vectors, 
	there exists $m$ vectors $h_1,\ldots,h_m\in\RR^{n_k+1}$
	s.t. $\innerProd{h_j}{{(\tilde{F_k})}_{i:}}>0$ for $x_i\in C_j$ and 
	$\innerProd{h_j}{{(\tilde{F_k})}_{i:}}<0$ for $x_i\notin C_j.$
	Let $H=[h_1,\ldots,h_m]\in\RR^{(n_k+1)\times m},$ one obtains
	\begin{align*}
	    {(H^T \tilde{F_k}^T)}_{ji} = \innerProd{h_j}{{(\tilde{F_k})}_{i:}} \begin{cases}>0 & x_i\in C_j \\ <0 & x_i\notin C_j \end{cases} .
	\end{align*}
	On the other hand,
	\begin{align*}
	    {(\Delta_L)}_{ij} 
	    &= \delta_{Lj}(x_i)\\
	    &= \frac{\partial\Phi}{\partial g_{Lj}(x_i)}  \nonumber\\
	    &=\begin{cases}
		l_1'(f_{Lj}(x_i)-y_{ij})\sigma'(g_{Lj}(x_i)) & x_i\in C_j \\ 
		l_2'(f_{Lj}(x_i)-y_{ij})\sigma'(g_{Lj}(x_i)) & x_i\notin C_j 
	    \end{cases} 
	\end{align*}
	We show that $H^T \tilde{F_k}^T \Delta_L=0$ if and only if $\Delta_L=0.$
	Indeed, if $\Delta_L=0$ the implication is trivial. For the other direction, assume that $H^T \tilde{F_k}^T \Delta_L=0.$
	Then it holds for every $j\in[m]$ that $0=(H^T \tilde{F_k}^T \Delta_L)_{jj}=\sum_{i=1}^N (H^T \tilde{F_k}^T)_{ji} (\Delta_L)_{ij}$.
	In particular,
	\begin{align*}
	  &\sum_{i=1}^N (H^T \tilde{F_k}^T)_{ji} (\Delta_L)_{ij}\\
	= &\sum_{\stackrel{i}{x_i \in C_j}} \inner{(\tilde{F_k})_{i,:},h_j}l_1'(f_{Lj}(x_i)-y_{ij})\sigma'(g_{Lj}(x_i))  \\
	 +&\sum_{\stackrel{i}{x_i \notin C_j}} \inner{(\tilde{F_k})_{i,:},h_j}l_2'(f_{Lj}(x_i)-y_{ij})\sigma'(g_{Lj}(x_i))\leq 0
	\end{align*}
	Note that under the assumptions on the loss and activation function and since the features are separable, the terms in both
	sums are non-positive and thus the sum can only vanish if all terms vanish which implies 
	\begin{align}\label{eq:loss-sep}
	    \begin{cases}
		l_1'\big(f_{Lj}(x_i) - y_{ij}\big)=0 & x_i\in C_j\\
		l_2'\big(f_{Lj}(x_i) - y_{ij}\big)=0 & x_i\notin C_j 
	    \end{cases} \forall i\in[N],j\in[m]
	\end{align} 
	which yields $\Delta_L=0.$
	
	Back to the main proof, the idea is to prove that $\Delta_L=0$ at the given critical point.
	Let us assume for the sake of contradiction that $\Delta_L\neq 0$.
	For every $l\in[L], i_l\in[n_l]$ 
	define $\Sigma_{i_l}^l=\mathop{diag}(\sigma'(g_{l i_l}(x_1)), \ldots, \sigma'(g_{l i_l}(x_N))).$
	Since the cumulative product $\prod_{l=k+1}^{L-1}\Sigma_{i_l}^l$ is a $N\times N$ diagonal matrix which
	contains only positive entries in its diagonal, it does not change the sign pattern of $\Delta_L$, 
	and thus it holds with, $H^T \tilde{F_k}^T \Delta_L=0$ if and only if $\Delta_L=0$, 
	for every $(i_{k+1},\ldots,i_{L-1})\in[n_{k+1}]\times\ldots\times[n_{L-1}]$ that
	\begin{align}\label{eq:induction}
	    0&\neq H^T \tilde{F_k}^T \Big(\prod_{l=k+1}^{L-1}\Sigma_{i_l}^l\Big) \Delta_L \\
	    0&\neq H^T \tilde{F_k}^T \Big(\prod_{l=k+1}^{L-1}\Sigma_{i_l}^l\Big) \Delta_L W_{L}^T & (\rank(W_L)=n_L) \nonumber,
	\end{align}
	where the last inequality is implied by \eqref{eq:induction} as $W_L$ has full column rank $n_L=m$. 
	Since the above product of matrices is a non-zero matrix, there must exist a non-zero column, say $p\in[n_{L-1}]$, then
	\begin{align*}
	    0\neq H^T \tilde{F_k}^T \Big(\prod_{l=k+1}^{L-1}\Sigma_{i_l}^l\Big) \Big(\Delta_L W_{L}^T\Big)_{:p} 
	\end{align*}
	Since $i_{L-1}$ is arbitrary, pick $i_{L-1}=p$ one obtains
	\begin{align}\label{eq1}
	    0&\neq H^T \tilde{F_k}^T \Big(\prod_{l=k+1}^{L-2}\Sigma_{i_l}^l\Big) \underbrace{\Sigma_p^{L-1}\Big(\Delta_L W_{L}^T\Big)_{:p}}_{(\Delta_{L-1})_{:p}} 
	\end{align}  
	Moreover, it holds for every $i\in[N]$
	\begin{align*}
	    &\Big( \Sigma_p^{L-1} \big(\Delta_L W_{L}^T\big)_{:p} \Big)_i \\
	    &=\sigma'(g_{(L-1)p}(x_i)) \sum_{j=1}^{n_{L}} \delta_{Lj}(x_i) {(W_L)}_{pj} \\
	    &=\delta_{(L-1)p}(x_i) \\
	    &=\big(\Delta_{L-1}\big)_{ip} 
	\end{align*}
	and thus from \eqref{eq1},
	\begin{align*}
	    0&\neq H^T \tilde{F_k}^T \Big(\prod_{l=k+1}^{L-2}\Sigma_{i_l}^l\Big) (\Delta_{L-1})_{:p} \\
	  \Rightarrow  0&\neq H^T \tilde{F_k}^T \Big(\prod_{l=k+1}^{L-2}\Sigma_{i_l}^l\Big) \Delta_{L-1} 
	\end{align*}
	Compared to \eqref{eq:induction}, we have reduced the product from $\prod_{l=k+1}^{L-1}$ to $\prod_{l=k+1}^{L-2}$,
	By induction, one can easily show that 
	\begin{align*}
	    0&\neq H^T \tilde{F_k}^T \Sigma_{i_{k+1}}^{k+1} \Delta_{k+2} 
	\end{align*}
	and hence $0\neq H^T \tilde{F_k}^T \Delta_{k+1}$, 
  which implies $0\neq \tilde{F_k}^T \Delta_{k+1}=[(\nabla_{W_{k+1}})^T \Phi,\nabla_{b_{k+1}}\Phi]^T$.
  However, this is a contradiction to the fact that we assumed that $(W_l,b_l)_{l=1}^L$ is a critical point.
  Thus it follows that it has to hold $\Delta_L=0$.
	As $\Delta_L=0$ it holds \eqref{eq:loss-sep} which implies
	\begin{align*}
	    \begin{cases}
		f_{Lj}(x_i)\geq y_{ij} & x_i\in C_j \\ 
		f_{Lj}(x_i)\leq y_{ij} & x_i\notin C_j \\ 
	    \end{cases} \quad\forall\,i\in[N],j\in[m].
	\end{align*} 
	This in turn implies $\Phi\Big((W_l,b_l)_{l=1}^L\Big)=0.$ 
	Thus the critical point $(W_l,b_l)_{l=1}^L$ is a global minimum.
	
	\item This can be seen as a special case of the first statement. 
	In particular, assume one has a zero-layer which coincides with the training inputs,
	namely $F_0=X$, then the result follows immediately.
    \end{enumerate}
\end{proof}
\fi
Note that the second statement of Theorem \ref{theo:main2} can be considered as a special case of the first statement.
In the case where $L=2$ and training inputs are linearly separable,
the second statement of our Theorem \ref{theo:main2} recovers the similar result of \cite{Gori92, Frasconi97}
for one-hidden layer networks.

Even though the assumptions of Theorem \ref{theo:independent_inputs} and Theorem \ref{theo:main2} are different
in terms of class of activation and loss functions, their results are related.
In fact, it is well known that if a set of vectors is linearly independent then they are linearly separable, see e.g. p.340 \cite{Barber12}.
Thus Theorem \ref{theo:main2} can be seen as a direct generalization of Theorem \ref{theo:independent_inputs}. 
The caveat, which is also the main difference to Theorem \ref{theo:main}, is that
Theorem \ref{theo:main2} makes only statements for all the critical points 
for which the problem has become separable at some layer,
whereas there is no such condition in Theorem \ref{theo:main}. 
However, we still think that the result is of practical relevance, as one can expect for a sufficiently
large network that stochastic gradient descent will lead to a network structure where the data becomes separable at a particular layer. When this happens all
the associated critical points are globally optimal. 
It is an interesting question for further research if one can show directly under some architecture condition
that the network outputs become linearly separable at some layer
for any local minimum and thus every local minimum is a global minimum. 

\section{Discussion}
Our results show that the loss surface becomes well-behaved when there is a wide layer in the network.
Implicitly, such a wide layer is often present in convolutional neural networks used in computer vision.
It is thus an interesting future research question how and if our result can be generalized to neural networks
with sparse connectivity. We think that the results presented in this paper are a significant
addition to the recent understanding why deep learning works so efficiently. In particular, since in this paper we are 
directly working with the neural networks used in practice without any modifications or simplifications.

\section*{Acknowledgment}
The authors acknowledge support by the ERC starting grant NOLEPRO 307793.

\bibliography{regul}

\begin{thebibliography}{40}
\providecommand{\natexlab}[1]{#1}
\providecommand{\url}[1]{\texttt{#1}}
\expandafter\ifx\csname urlstyle\endcsname\relax
  \providecommand{\doi}[1]{doi: #1}\else
  \providecommand{\doi}{doi: \begingroup \urlstyle{rm}\Url}\fi

\bibitem[Auer et~al.(1996)Auer, Herbster, and Warmuth]{Auer96}
Auer, P., Herbster, M., and Warmuth, M.~K.
\newblock Exponentially many local minima for single neurons.
\newblock In \emph{NIPS}, 1996.

\bibitem[Ba \& Caruana(2014)Ba and Caruana]{BaCar2014}
Ba, J. and Caruana, R.
\newblock Do deep nets really need to be deep?
\newblock In \emph{NIPS}, 2014.

\bibitem[Baldi \& Hornik(1988)Baldi and Hornik]{Baldi88}
Baldi, P. and Hornik, K.
\newblock Neural networks and principle component analysis: Learning from
  examples without local minima.
\newblock \emph{Neural Networks}, 2:\penalty0 53--58, 1988.

\bibitem[Barber(2012)]{Barber12}
Barber, D.
\newblock \emph{Bayesian Reasoning and Machine Learning}.
\newblock Cambridge University Press, Berlin, 2012.

\bibitem[Brutzkus \& Globerson(2017)Brutzkus and Globerson]{Brutzkus2017}
Brutzkus, A. and Globerson, A.
\newblock Globally optimal gradient descent for a convnet with gaussian inputs,
  2017.
\newblock arXiv:1702.07966.

\bibitem[Caruana et~al.(2001)Caruana, Lawrence, and Giles]{Caruana01}
Caruana, R., Lawrence, S., and Giles, L.
\newblock Overfitting in neural nets: Backpropagation, conjugate gradient, and
  early stopping.
\newblock In \emph{NIPS}, 2001.

\bibitem[Choromanska et~al.(2015{\natexlab{a}})Choromanska, Hena, Mathieu,
  Arous, and LeCun]{Choro15}
Choromanska, A., Hena, M., Mathieu, M., Arous, G.~B., and LeCun, Y.
\newblock The loss surfaces of multilayer networks.
\newblock In \emph{AISTATS}, 2015{\natexlab{a}}.

\bibitem[Choromanska et~al.(2015{\natexlab{b}})Choromanska, LeCun, and
  Arous]{ChoroJLMR15}
Choromanska, A., LeCun, Y., and Arous, G.~B.
\newblock Open problem: The landscape of the loss surfaces of multilayer
  networks.
\newblock \emph{JMLR}, 2015{\natexlab{b}}.

\bibitem[Ciresan et~al.(2010)Ciresan, Meier, Gambardella, and
  Schmidhuber]{CirEtAl2010}
Ciresan, D.~C., Meier, U., Gambardella, L.~M., and Schmidhuber, J.
\newblock Deep, big, simple neural nets for handwritten digit recognition.
\newblock \emph{Neural Computation}, 22:\penalty0 3207--3220, 2010.

\bibitem[Dauphin et~al.(2014)Dauphin, Pascanu, Gulcehre, Cho, Ganguli, and
  Bengio]{Dauphin16}
Dauphin, Y., Pascanu, R., Gulcehre, C., Cho, K., Ganguli, S., and Bengio, Y.
\newblock Identifying and attacking the saddle point problem in
  high-dimensional non-convex optimization.
\newblock In \emph{NIPS}, 2014.

\bibitem[Frasconi et~al.(1997)Frasconi, Gori, and Tesi]{Frasconi97}
Frasconi, P., Gori, M., and Tesi, A.
\newblock Successes and failures of backpropagation: A theoretical
  investigation.
\newblock \emph{Progress in Neural Networks: Architecture}, 1997.

\bibitem[Gautier et~al.(2016)Gautier, Nguyen, and Hein]{GauNgoHei2016}
Gautier, A., Nguyen, Q., and Hein, M.
\newblock Globally optimal training of generalized polynomial neural networks
  with nonlinear spectral methods.
\newblock In \emph{NIPS}, 2016.

\bibitem[Goodfellow et~al.(2015)Goodfellow, Vinyals, and Saxe]{Goodfellow15}
Goodfellow, I.~J., Vinyals, O., and Saxe, A.~M.
\newblock Qualitatively characterizing neural network optimization problems.
\newblock In \emph{ICLR}, 2015.

\bibitem[Gori \& Tesi(1992)Gori and Tesi]{Gori92}
Gori, M. and Tesi, A.
\newblock On the problem of local minima in backpropagation.
\newblock \emph{IEEE Transactions on Pattern Analysis and Machine
  Intelligence}, 14:\penalty0 76--86, 1992.

\bibitem[Haeffele \& Vidal(2015)Haeffele and Vidal]{HaeVid2015}
Haeffele, B.~D. and Vidal, R.
\newblock Global optimality in tensor factorization, deep learning, and beyond,
  2015.
\newblock arXiv:1506.07540v1.

\bibitem[Hartley \& Zisserman(2004)Hartley and Zisserman]{Hartley2004}
Hartley, R. and Zisserman, A.
\newblock \emph{Multiple view geometry in computer vision}.
\newblock Cambridge University Press, New York, 2004.

\bibitem[Janzamin et~al.(2016)Janzamin, Sedghi, and Anandkumar]{Janzamin15}
Janzamin, M., Sedghi, H., and Anandkumar, A.
\newblock Beating the perils of non-convexity: Guaranteed training of neural
  networks using tensor methods.
\newblock \emph{arXiv:1506.08473}, 2016.

\bibitem[Kawaguchi(2016)]{Kawaguchi16}
Kawaguchi, K.
\newblock Deep learning without poor local minima.
\newblock In \emph{NIPS}, 2016.

\bibitem[Krantz \& Parks(2002)Krantz and Parks]{KraPar2002}
Krantz, S.~G. and Parks, H.~R.
\newblock \emph{A Primer of Real Analytic Functions}.
\newblock Birkh{\"a}user, Boston, second edition, 2002.

\bibitem[Krizhevsky et~al.(2012)Krizhevsky, Sutskever, and
  Hinton]{KriSutHin2012}
Krizhevsky, A., Sutskever, I., and Hinton, G.~E.
\newblock Imagenet classification with deep convolutional neural networks.
\newblock In \emph{NIPS}, 2012.

\bibitem[LeCun et~al.(1990)LeCun, Boser, Denker, Henderson, Howard, Hubbard,
  and Jackel]{CunEtAl1990}
LeCun, Y., Boser, B., Denker, J.S., Henderson, D., Howard, R.E., Hubbard, W.,
  and Jackel, L.D.
\newblock Handwritten digit recognition with a back-propagation network.
\newblock In \emph{NIPS}, 1990.

\bibitem[LeCun et~al.(2015)LeCun, Bengio, and Hinton]{CunBenHin2015}
LeCun, Y., Bengio, Y., and Hinton, G.
\newblock Deep learning.
\newblock \emph{Nature}, 521, 2015.

\bibitem[Lee et~al.(2016)Lee, Simchowitz, Jordan, and Recht]{LeeEtal2016}
Lee, J.~D., Simchowitz, M., Jordan, M.~I., and Recht, B.
\newblock Gradient descent only converges to minimizers.
\newblock In \emph{COLT}, 2016.

\bibitem[Lin et~al.(2016)Lin, Memisevic, and Konda]{LinEtAl2016}
Lin, Z., Memisevic, R., and Konda, K.
\newblock How far can we go without convolution: Improving fully-connected
  networks.
\newblock preprint, arXiv:1511.02580, 2016.

\bibitem[Marsden(1974)]{Marsden74}
Marsden, J.~E.
\newblock \emph{Elementary classical analysis}.
\newblock W.H.Freeman and Company, 1974.

\bibitem[Milnor(1965)]{Mil1965}
Milnor, J.
\newblock \emph{Lectures on {H}-{C}obordism Theorem}.
\newblock Princeton University Press, Princeton, 1965.

\bibitem[Mityagin(2015)]{Boris15}
Mityagin, B.
\newblock The zero set of a real analytic function, 2015.
\newblock arXiv:1512.07276.

\bibitem[Neyshabur et~al.(2015)Neyshabur, Salakhutdinov, and
  Srebro]{NeyEtAl2015}
Neyshabur, B., Salakhutdinov, R.~R., and Srebro, N.
\newblock Path-sgd: Path-normalized optimization in deep neural networks.
\newblock In \emph{NIPS}, 2015.

\bibitem[Nguyen(2015)]{Dan15}
Nguyen, V.~D.
\newblock Complex powers of analytic functions and meromorphic renormalization
  in qft, 2015.
\newblock arXiv:1503.00995.

\bibitem[Poggio \& Liao(2017)Poggio and Liao]{Poggio2017}
Poggio, T. and Liao, Q.
\newblock Theory ii: Landscape of the empirical risk in deep learning, 2017.
\newblock arXiv:1703.09833.

\bibitem[Rister \& Rubin(2017)Rister and Rubin]{Rister2017}
Rister, B. and Rubin, D.~L.
\newblock Piecewise convexity of artificial neural networks, 2017.
\newblock arXiv:1607.04917.

\bibitem[Safran \& Shamir(2016)Safran and Shamir]{SafSha2016}
Safran, I. and Shamir, O.
\newblock On the quality of the initial basin in overspecified networks.
\newblock In \emph{ICML}, 2016.

\bibitem[Sagun et~al.(2016)Sagun, Bottou, and LeCun]{Levent17}
Sagun, L., Bottou, L., and LeCun, Y.
\newblock Singularity of the hessian in deep learning, 2016.
\newblock arXiv:1611.07476.

\bibitem[Sima(2002)]{Sim2002}
Sima, J.
\newblock Training a single sigmoidal neuron is hard.
\newblock \emph{Neural Computation}, 14:\penalty0 2709--2728, 2002.

\bibitem[Soltanolkotabi(2017)]{Soltanolkotabi2017}
Soltanolkotabi, M.
\newblock Learning relus via gradient descent, 2017.
\newblock arXiv:1705.04591.

\bibitem[Soudry \& Hoffer(2017)Soudry and Hoffer]{Soudry17}
Soudry, D. and Hoffer, E.
\newblock Exponentially vanishing sub-optimal local minima in multilayer neural
  networks, 2017.
\newblock arXiv:1702.05777.

\bibitem[Vincent et~al.(2010)Vincent, Larochelle, Lajoie, Bengio, and
  Manzagol]{VinEtal2010}
Vincent, P., Larochelle, H., Lajoie, I., Bengio, Y., and Manzagol, P.
\newblock Stacked denoising autoencoders: Learning useful representations in a
  deep network with a local denoising criterion.
\newblock \emph{JLMR}, 11:\penalty0 3371--3408, 2010.

\bibitem[Yu \& Chen(1995)Yu and Chen]{Yu95}
Yu, X. and Chen, G.
\newblock On the local minima free condition of backpropagation learning.
\newblock \emph{IEEE Transaction on Neural Networks}, 6:\penalty0 1300--1303,
  1995.

\bibitem[Zhang et~al.(2017)Zhang, Bengio, Hardt, Recht, and Vinyals]{Zhang16}
Zhang, C., Bengio, S., Hardt, M., Recht, B., and Vinyals, Oriol.
\newblock Understanding deep learning requires re-thinking generalization.
\newblock In \emph{ICLR}, 2017.

\bibitem[Zhou \& Feng(2017)Zhou and Feng]{Zhou2017}
Zhou, P. and Feng, J.
\newblock The landscape of deep learning algorithms, 2017.
\newblock arXiv:1705.07038.

\end{thebibliography}
\bibliographystyle{icml2017}

\end{document}
